\documentclass[runningheads]{llncs}
\usepackage[T1]{fontenc}
\usepackage{graphicx}
\usepackage[textsize=small]{todonotes}

\usepackage{url}

\usepackage{multicol}
\usepackage[bookmarks=true]{hyperref}

\usepackage{float}
\usepackage{amsfonts}
\usepackage{comment}
\usepackage{times}
\usepackage{amsmath}
\usepackage{changepage}
\usepackage{amssymb}
\usepackage{graphicx}
\usepackage{adjustbox}
\usepackage{latexsym}
\usepackage{enumerate}
\usepackage[font=small,belowskip=0pt]{caption}
\setcounter{MaxMatrixCols}{30}
\usepackage{mathtools}
\usepackage{algcompatible}
\usepackage{algpseudocode}
\usepackage[ruled,vlined,linesnumbered]{algorithm2e}
\usepackage{bbm}
\usepackage{xcolor}
\usepackage{bm}
\usepackage{booktabs}
\usepackage{comment}
\usepackage[xindy, acronyms, nowarn]{glossaries}   %

\usepackage{fontawesome5}

\definecolor{porange}{HTML}{E77500} %

\SetCommentSty{mycommfont}

\setlength{\belowcaptionskip}{-3pt}

\usepackage{url}
\usepackage{hyperref}
 \hypersetup{
     colorlinks=true,
     linkcolor=blue,
     filecolor=blue,
     citecolor=blue,      
     urlcolor=black,
     }
\usepackage{diagbox}

\usepackage{amsmath}
\usepackage{bm}

\newtheorem{assumption}{Assumption}

\AtBeginEnvironment{proof}{\footnotesize}

\usepackage{ifthen}
\newboolean{include-notes}
\newboolean{include-new}
\newboolean{include-remove}
\setboolean{include-notes}{true}
\setboolean{include-new}{true}
\setboolean{include-remove}{false}

\usepackage{xcolor}
\usepackage[normalem]{ulem}
\newcommand{\jaime}[1]{\ifthenelse{\boolean{include-notes}}{\textcolor{orange}{\textbf{Jaime:} #1}}{}}
\newcommand{\haimin}[1]{\ifthenelse{\boolean{include-notes}}{\textcolor{teal}{\textbf{Haimin:} #1}}{}}
\newcommand{\justin}[1]{\ifthenelse{\boolean{include-notes}}{\textcolor{purple}{\textbf{Justin:} #1}}{}}
\newcommand{\remove}[1]{\ifthenelse{\boolean{include-remove}}{\textcolor{red}{\sout{#1}}}{}}
\newcommand{\new}[1]{\ifthenelse{\boolean{include-new}}{\textcolor{blue}{#1}}{#1}}
\renewcommand{\todo}[1]{\ifthenelse{\boolean{include-notes}}{\textcolor{magenta}{#1}}{#1}}

\usepackage{lipsum}

\newglossarystyle{compactlongstyle}{
    \setglossarystyle{long}

    \renewcommand*{\glsgroupheading}[1]{}
    
}

\glsdisablehyper
\makeglossaries

\newglossaryentry{LSE}
{
  name={LSE},
  plural={LSEs},
  description={local Stackelberg equilibrium},
  first={local Stackelberg equilibrium (\glsentrytext{LSE})},
  descriptionplural={local Stackelberg equilibria},
  firstplural={local Stackelberg equilibria (\glsentryplural{LSE})}
}

\newglossaryentry{FSE}
{
  name={FSE},
  plural={FSEs},
  description={feedback Stackelberg equilibrium},
  first={feedback Stackelberg equilibrium (\glsentrytext{FSE})},
  descriptionplural={feedback Stackelberg equilibria},
  firstplural={feedback Stackelberg equilibria (\glsentryplural{FSE})}
}

\newglossaryentry{DSE}
{
  name={DSE},
  plural={DSEs},
  description={differential Stackelberg equilibrium},
  first={differential Stackelberg equilibrium (\glsentrytext{DSE})},
  descriptionplural={differential Stackelberg equilibria},
  firstplural={differential Stackelberg equilibria (\glsentryplural{DSE})}
}

\newglossaryentry{RL}
{
  name={RL},
  description={reinforcement learning},
  first={reinforcement learning (\glsentrytext{RL})},
}

\newglossaryentry{HJI}
{
  name={HJI},
  description={Hamilton--Jacobi--Isaacs},
  first={Hamilton--Jacobi--Isaacs (\glsentrytext{HJI})},
}

\newglossaryentry{GAS}
{
  name={GAS},
  description={globally asymptotically stable},
  first={globally asymptotically stable (\glsentrytext{GAS})},
}

\newglossaryentry{a.s.}
{
  name={a.s.},
  description={almost surely},
  first={almost surely (\glsentrytext{a.s.})},
}

\newglossaryentry{MDP}
{
  name={MDP},
  description={Markov decision process},
  first={Markov decision process (\glsentrytext{MDP})},
}

\newglossaryentry{SAC}
{
  name={SAC},
  description={Soft Actor--Critic},
  first={Soft Actor--Critic (\glsentrytext{SAC})},
}

\newglossaryentry{tGDA}
{
  name={$\tau$-GDA},
  description={gradient descent-ascent with finite timescale separation},
  first={gradient descent-ascent with finite timescale separation (\glsentrytext{tGDA})},
}

\newglossaryentry{MAGICS}
{
  name={MAGICS},
  description={Minimax Actors Guided by Implicit Critic Stackelberg},
  first={Minimax Actors Guided by Implicit Critic Stackelberg (\glsentrytext{MAGICS})},
}

\newglossaryentry{ISAACS}
{
  name={ISAACS},
  description={Iterative Soft Adversarial Actor--Critic for Safety},
  first={Iterative Soft Adversarial Actor--Critic for Safety (\glsentrytext{ISAACS})},
}

\newglossaryentry{ISAACStbg}
{
  name={ISAAC$\mathcal{S}$},
  description={Iterative Soft Adversarial Actor--Critic with Stackelberg learning dynamics for Safety},
  first={Iterative Soft Adversarial Actor--Critic with Stackelberg learning dynamics for Safety (\glsentrytext{ISAACStbg})},
}

\newglossaryentry{A2C}
{
  name={A2C},
  description={Advantage Actor--Critic},
  first={Advantage Actor-Critic (\glsentrytext{A2C})},
}

\newglossaryentry{iHvp}
{
  name={iHvp},
  description={inverse hessian vector product},
  first={inverse-Hessian-vector product
  (\glsentrytext{iHvp})},
}

\newglossaryentry{jvp}
{
  name={jvp},
  description={jacobian vector product},
  first={Jacobian-vector product
  (\glsentrytext{jvp})},
}

\newcommand{\p}[1]{\smallskip \noindent \textbf{{#1}.}}

\newcommand{\eg}{\emph{e.g.}}
\newcommand{\ie}{\emph{i.e.}}

\newcommand{\reals}{\mathbb{R}}

\newcommand{\region}{{U}}

\newcommand{\distr}{p}

\DeclareMathOperator*{\expectation}{\mathbb{E}}

\newcommand{\grad}{{\nabla}}
\newcommand{\total}{{D}}

\newcommand{\jacobian}{\mathbf{J}}

\DeclareMathOperator*{\argmax}{arg\,max}
\DeclareMathOperator*{\argmin}{arg\,min}

\newcommand{\schur}{\operatorname{\mathtt{S}_1}}

\newcommand{\state}{{x}}
\newcommand{\ctrl}{{u}}
\newcommand{\dstb}{{d}}
\newcommand{\noise}{{v}}

\newcommand{\traj}{{\mathbf{x}}}%

\newcommand{\xset}{{\mathcal{X}}}  %
\newcommand{\cset}{{\mathcal{U}}}
\newcommand{\dset}{{\mathcal{D}}}

\newcommand{\nx}{{n}}
\newcommand{\nc}{{n_\ctrl}}
\newcommand{\nd}{{n_\dstb}}

\newcommand{\dyn}{{f}}

\newcommand{\valfunc}{{V}}
\newcommand{\qfunc}{{Q}}
\newcommand{\cost}{{J}}
\newcommand{\criticcost}{{L}}

\newcommand{\policy}{{\pi}}

\newcommand{\policyc}{\policy^\ctrl}
\newcommand{\policyd}{\policy^\dstb}

\newcommand{\policycparam}{{\theta}}
\newcommand{\policycparameq}{{\policycparam^*}}
\newcommand{\policydparam}{{\psi}}
\newcommand{\policydparameq}{{\policydparam^*}}
\newcommand{\criticparam}{{\omega}}
\newcommand{\jpolicyparam}{{y}}

\newcommand{\resmap}{r}

\newcommand{\reward}{r}

\newcommand{\discount}{\gamma}
\newcommand{\buffer}{\mathcal{B}}

\newcommand{\entreg}{\eta}
\newcommand{\lr}{\alpha}

\newcommand{\target}{{\mathcal{T}}}

\renewcommand{\failure}{{\mathcal{F}}}

\newcommand{\reachavoid}{{\mathcal{RA}}}

\newcommand{\shield}{\text{\tiny{\faShield*}}}

\newcommand{\smalloh}{{o}}

\begin{document}
\mainmatter              %

\title{MAGICS: Adversarial RL with Minimax Actors Guided by Implicit Critic Stackelberg for Convergent Neural Synthesis of Robot Safety}
\titlerunning{Convergent Neural Synthesis of Robot Safety}  %
\author{Justin Wang\inst{1}\thanks{J.\ Wang and H. Hu contributed equally.} \and Haimin Hu\inst{1 \star} \and Duy P. Nguyen\inst{1} \and Jaime Fernández Fisac\inst{1}}
\authorrunning{Wang and Hu et al.} %
\institute{Department of Electrical and Computer Engineering, Princeton University, USA\\
\email{\{jw4406,haiminh,duyn,jfisac\}@princeton.edu}}

\maketitle              %

\begin{abstract}
While robust optimal control theory provides a rigorous framework to compute robot control policies that are provably safe, it struggles to scale to high-dimensional problems, leading to increased use of deep learning for tractable synthesis of robot safety.
Unfortunately, existing neural safety synthesis methods often lack convergence guarantees and solution interpretability.
In this paper, we present Minimax Actors Guided by Implicit Critic Stackelberg (MAGICS), 
a novel adversarial \gls{RL} algorithm that guarantees local convergence to a minimax equilibrium solution.
We then build on this approach to provide local convergence guarantees for a general deep \gls{RL}-based robot safety synthesis algorithm.
Through both simulation studies on OpenAI Gym environments and hardware experiments with a 36-dimensional quadruped robot, we show that MAGICS can yield robust control policies outperforming the state-of-the-art neural safety synthesis methods.

\keywords{adversarial reinforcement learning \and robot safety \and game theory}
\end{abstract}

\section{Introduction}

The widespread deployment of autonomous robots calls for robust control methods to ensure their reliable operation in diverse environments.
Safety filters~\cite{hsu2023sf} have emerged as an effective approach to ensure safety for a broad spectrum of robotic systems, such as autonomous driving~\cite{zeng2021safety,tearle2021predictive,hu2022sharp,hu2023active}, legged locomotion~\cite{hsu2015control,agrawal2017discrete,he2024agile,nguyen2024gameplay}, and aerial robots~\cite{fisac2018general,singletary2022onboard,zhang2024gcbf+,chen2021fastrack}.
Model-based numerical safety synthesis methods~\cite{bansal2017hamilton,bui2021real} offer verifiable safety guarantees, but can only scale up to 5-6 state variables.
In order to develop safety filters at scale, recent research efforts have been dedicated towards using neural representations of robot policies~\cite{fisac2019bridging,robey2020learning,bansal2021deepreach,hsunguyen2023isaacs}, showing potential for safety filtering with tens~\cite{nguyen2024gameplay,he2024agile} even hundreds of state variables~\cite{hu2023deception}. 

\begin{figure}[!hbtp]
  \centering
  \includegraphics[width=1.0\columnwidth]{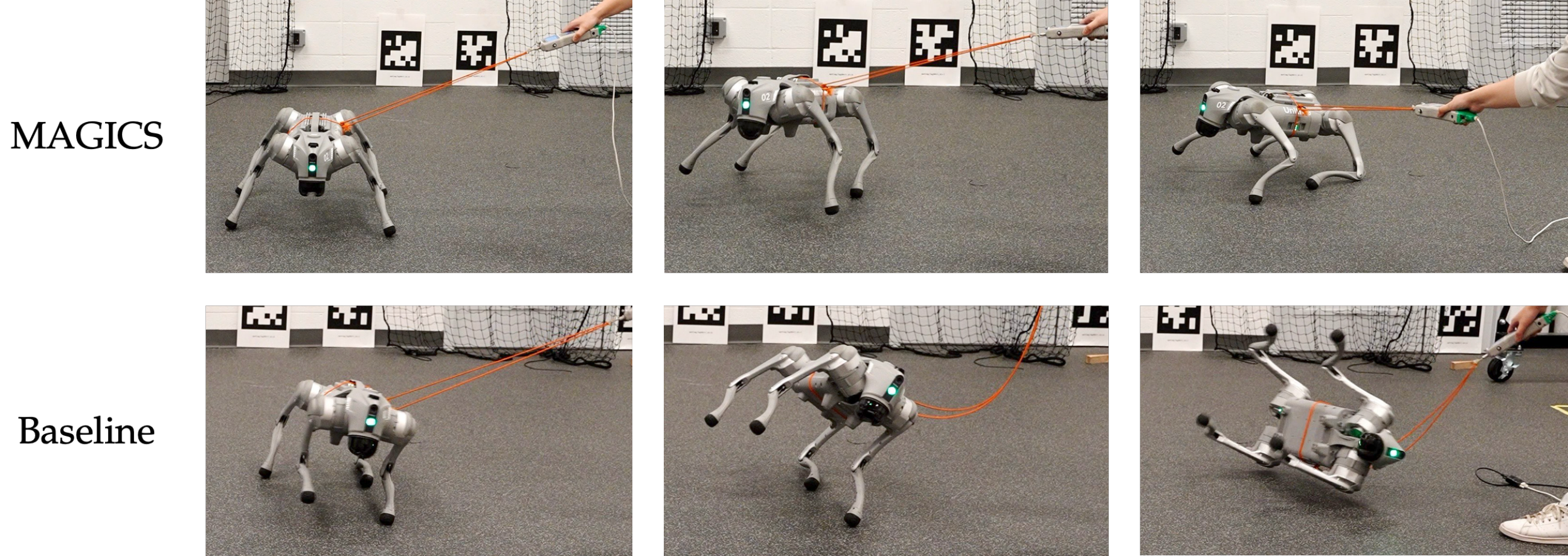}
  \caption{\label{fig:front_fig}Comparing to a non-game baseline robust \gls{RL}, our proposed game-theoretic adversarial RL algorithm yields a control policy consistently more robust when applied to safe quadrupedal locomotion and stress-tested with varying tugging forces.
  \vspace{-5mm}
  }
\end{figure}

While neural safety filters have shown significant progress in scalability, they are often challenging to analyze due to their black-box nature.
Moreover, in adversarial settings, na\"ive training of agent policies can lead to severe oscillatory behaviors, preventing the algorithm from converging.
Recently developed game-theoretic machine learning algorithms guarantee convergence to mathematically meaningful equilibrium solutions (\eg, Nash or Stackelberg), even for training of black-box models such as deep neural networks~\cite{fiez2020implicit,jin2020local,tijana2021who,zheng2022stackelberg}.
However, providing \textit{convergence guarantees} for training robust \textit{neural} control policies with zero-sum (adversarial) formulations remains an outstanding challenge.

\p{Statement of Contributions}
This paper introduces \gls{MAGICS}, a novel game-theoretic deep adversarial \gls{RL} algorithm that guarantees local convergence to a feedback Stackelberg equilibrium strategy.
Building on this framework, we propose, for the first time, a provably convergent neural synthesis algorithm for approximate robust safety analysis in high dimensions. 
In addition to our theoretical contributions, we demonstrate empirically that \gls{MAGICS} yields robust control policies consistently outperforming the prior state-of-the-art neural robust controller in both simulations and hardware tests.

\section{Related Work}

\p{Game-Theoretic Learning Algorithms}
Recent years have seen significant progress in machine learning problems formulated as games, such as generative adversarial networks (GANs)~\cite{goodfellow2020generative}, adversarial \gls{RL}~\cite{pinto2017robust}, and hyperparameter optimization~\cite{maclaurin2015gradient}.
Those problems involve interacting agents with coupled and potentially competing objectives, which calls for careful design of the training algorithm.
Fiez et al.~\cite{fiez2020implicit} leverages the implicit function theorem to derive a provably convergent gradient-based algorithm for machine learning problems formulated as Stackelberg games.
In their follow-up work~\cite{fiez2021local}, they show that the simple \gls{tGDA} algorithm guarantees local convergence to a minimax solution in zero-sum settings, a result also reported by concurrent work~\cite{jin2020local}.
More recent work~\cite{maheshwari2023convergent} shows that convergence can be achieved in Stackelberg learning only with first-order gradient information.

Our work builds on insights from the game-theoretic learning community and provides a novel, yet rigorous convergence analysis for adversarial \gls{RL}.

\p{Adversarial Reinforcement Learning}
Multi-agent robust policy synthesis is increasingly solved by adversarial \gls{RL} methods due to their scalability.
Pinto et al.~\cite{pinto2017robust} pioneered the idea of robust adversarial reinforcement learning (RARL), which improves robustness by jointly training a stabilizing policy and a destabilizing adversary.
However, this approach ignores the coupling between the agents' objectives.
Learning with Opponent-Learning Awareness (LOLA)~\cite{foerster2018learning} explicitly captures the coupled interests among agents in multi-agent \gls{RL} by accounting for the influence of one agent’s policy on the predicted parameter of the other agents.
Huang et al.~\cite{huang2022robust} model policy-gradient-based adversarial \gls{RL} as a Stackelberg game and apply the Stackelberg learning algorithm from Fiez et al.~\cite{fiez2020implicit} to solve the \gls{RL} problem.
Although these game-inspired adversarial \gls{RL} algorithms have shown promising improvement over non-game baselines, they typically lack provable convergence guarantees.
Recent work by Zheng et al.~\cite{zheng2022stackelberg} models the hierarchical interaction between the actor and critic in single-agent actor-critic-based \gls{RL}~\cite{konda1999actor,haarnoja2018soft} as a two-player general-sum game, and provides a gradient-based algorithm that is locally convergent to a Stackelberg equilibrium.

Our work combines the best of both worlds by building on the state-of-the-art game-theoretic \gls{RL}~\cite{zheng2022stackelberg} and adversarial learning~\cite{fiez2021local} approaches to address the missing piece of provably convergent \emph{multi-agent} adversarial \gls{RL} in continuous state--action spaces.

\p{Data-Driven Safety Filters for Robotics}
Deep learning has enabled scalable synthesis of robust control policies.
Rapid Motor Adaptation (RMA)~\cite{kumar2021rma} uses deep \gls{RL} to learn a base policy, and supervised learning to train an online adaptation module, allowing legged robots to quickly and robustly adapt to novel environments.
Specially focused on safety, data-driven safety filters~\cite{hsu2023sf,wabersich2023data} aim at providing scalable safety assurances.
Robey et al.~\cite{robey2020learning} use control barrier function (CBF) constraints as self-supervision signals to learn a CBF from expert demonstrations.
Another line of work focuses on neural approximation of safety Bellman equations.
Safety \gls{RL}~\cite{fisac2019bridging} proposes to modify Hamilton--Jacobi equation~\cite{bansal2017hamilton} with contraction mapping, rendering \gls{RL} suitable for approximate safety analysis.
Hsu et al.~\cite{hsunguyen2023isaacs} extend the single-agent safety \gls{RL} to the two-player zero-sum setting by offline co-training a best-effort safety controller and a worst-case disturbance policy.
However, existing multi-agent neural safety synthesis approaches predominantly lack convergence guarantees, which can make the training notoriously difficult and time-consuming.

To the best of our knowledge, our proposed game-theoretic deep adversarial \gls{RL} algorithm is the first \emph{provably-convergent} multi-agent neural safety synthesis approach.

\section{Preliminaries and Problem Formulation}

\p{Notation}
Given a function $f$, we denote $\total_\theta f$ as the total derivative of $f$ with respect to $\theta$, $\grad_\theta f$ as the partial derivative of $f$ with respect to $\theta$, $\grad_{\theta \psi} f := \partial^2 f / \partial \theta \partial \psi$, and $\grad_\theta^2 f := \partial^2 f / \partial \theta^2$.
We denote $\|\cdot\|$ as the 2-norm of vectors and the spectral norm of matrices.
We indicate matrix $A$ is positive and negative definite with $A \succ 0$ and $A \prec 0$.
We assume throughout that all functions $f$ are smooth, \ie, $f \in C^q$ for some $q \geq 2$.

\p{Zero-Sum Dynamic Games}
We consider discrete-time nonlinear dynamics that describe the robot motion:
\begin{equation}
\label{eq:dyn_sys}
    \state_{t+1} = \dyn (\state_t, \ctrl_t, \dstb_t),
\end{equation}
where $\state_t \in \xset\subseteq\reals^{\nx}$ is the state,
$\ctrl_t \in \cset \subset \reals^{\nc}$ is the controller input (belongs to the ego robot),
$\dstb_t \in \dset \subset \reals^{\nd}$ is the disturbance input,
and $\dyn: \xset \times \cset \times \dset \to \xset$ is a continuous nonlinear function that describes the physical system dynamics.
We model the \textit{adversarial} interaction between the controller and disturbance as a \textit{zero-sum dynamic game} with objective $\cost^{\policyc, \policyd}(\traj)$, where $\policyc,\policyd: \xset \rightarrow \cset$ are control and disturbance policies, respectively, $\traj := \traj^{\policyc, \policyd}_{\state, T} = (\state_0, \state_1, \ldots, \state_{T-1})$ denotes the state trajectory starting from $\state_0 = \state$ under dynamics~\eqref{eq:dyn_sys}, policies $\policyc$, and $\policyd$.
In this paper, we seek to compute a control policy $\policyc$ and disturbance policy $\policyd$ that constitute a \textit{\gls{LSE}} in the space of feedback policies of a zero-sum dynamic game, where $\policyc$ maximizes $\cost^{\policyc, \policyd}(\traj)$ and $\policyd$ minimizes $\cost^{\policyc, \policyd}(\traj)$. This is formalized in the following definition.

\begin{definition}[Local Stackelberg Equilibrium]
\label{def:LSE}
    Let $\Pi^\ctrl$ and $\Pi^\dstb$ be the sets of control and disturbance policies, respectively. The strategy pair $(\policy^{\ctrl,*}, \policy^{\dstb,*}) \in \Pi^\ctrl \times \Pi^\dstb$ is a local Stackelberg equilibrium if
    \begin{equation}
        \inf_{\policy^\dstb \in \resmap(\policy^{\ctrl,*})} \cost^{\policy^{\ctrl,*}, \policy^\dstb}(\traj) \geq \inf_{\policy^\dstb \in \resmap(\tilde{\policy}^\ctrl)} \cost^{\tilde{\policy}^\ctrl, \policy^\dstb}(\traj),\quad \forall \tilde{\policy}^\ctrl \in \Pi^\ctrl,
    \end{equation}
    and $\policy^{b,*} \in \resmap(\policy^{\ctrl,*})$, where $\resmap(\policy^\ctrl) := \{\policy^\dstb \in \Pi^\dstb \mid \cost^{\policy^\ctrl, \tilde{\policy}^\dstb} \geq \cost^{\policy^\ctrl, \policy^\dstb},~\forall \tilde{\policy}^\dstb \in \Pi^\dstb\}$ is the optimal response map of the follower.
    Here, $\Pi^\ctrl$ and $\Pi^\dstb$ are the sets of control and disturbance policies, respectively.
\end{definition}

We remark that an LSE in the space of feedback policies is, in general, \emph{not} the same as an \gls{FSE}.
However, the two types of equilibria \emph{can} coincide in some special circumstances as outlined in \cite{2021_FSE_LSE}.

\autoref{def:LSE} is aligned with our interest in \emph{worst-case} robot safety analysis: by assigning the controller as the leader and disturbance the follower, we give the disturbance the \textit{instantaneous information advantage}~\cite{isaacs1954differential}.

An \gls{LSE} can be characterized using the first- and second-order optimality conditions (sufficient conditions of those in~\autoref{def:LSE}), leading to the notion of a \gls{DSE}~\cite{fiez2020implicit}, which can be verified more easily.
To this end, we adopt a finite-dimensional parameterization of players' policies: $\policyc = \policyc_\policycparam$ and $\policyd = \policyd_\policycparam$, where $\policycparam \in \reals^{n_\policycparam}$ and $\policydparam \in \reals^{n_\policydparam}$.
In \autoref{sec:approach}, we consider the neural representation of policies, \ie, $\policyc_\policycparam$ and $\policyd_\policycparam$ are deep neural networks.
We denote the resulting game objective value as $\cost^{\policyc_\policycparam, \policyd_\policydparam}(\policycparam, \policydparam, \traj)$.
We recall that, in zero-sum games, a \gls{DSE} is equivalent to a local minimax equilibrium.

\begin{definition}[Strict Local Minimax Equilibrium~\cite{fiez2020implicit,fiez2021global,jin2020local}]
Strategy pair $(\policyc_\policycparameq, \policyd_\policydparameq)$ is a differential Stackelberg equilibrium of a zero-sum dynamic game if conditions $\grad_\policycparam \cost^{\policy^{\ctrl}_\policycparameq, \policy^\dstb_\policydparameq}(\policycparameq, \policydparameq, \traj) = 0$, $\grad_\policydparam \cost^{\policy^{\ctrl}_\policycparameq, \policy^\dstb_\policydparameq}(\policycparameq, \policydparameq, \traj) = 0$, $\schur(\jacobian_\cost(\policycparameq, \policydparameq, \traj)) \prec 0$, and $\grad^2_\policydparam(\jacobian_\cost(\policycparameq, \policydparameq, \traj)) \succ 0$ hold, where $\jacobian_\cost(\policycparam, \policydparam, \traj)$ denotes the Jacobian of the \emph{individual} gradient vector:
\vspace{-2mm}
\begin{equation}
    \jacobian_\cost(\policycparam, \policydparam, \traj) = 
    \begin{bmatrix}
    -\nabla_\policycparam^2 \cost^{\policy^{\ctrl}_\policycparam, \policy^\dstb_\policydparam}(\policycparam, \policydparam, \traj)           &\quad~-\nabla_{\policycparam\policydparam} \cost^{\policy^{\ctrl}_\policycparam, \policy^\dstb_\policydparam}(\policycparam, \policydparam, \traj) \\
    \nabla_{\policycparam\policydparam}^{\top} \cost^{\policy^{\ctrl}_\policycparam, \policy^\dstb_\policydparam}(\policycparam, \policydparam, \traj)  &\quad~\nabla_\policydparam^2 \cost^{\policy^{\ctrl}_\policycparam, \policy^\dstb_\policydparam}(\policycparam, \policydparam, \traj)
    \end{bmatrix},
    \vspace{-1mm}
\end{equation}
and $\schur(\jacobian_\cost)$ denotes the Schur complement of $\jacobian_\cost$ with respect to its lower-right block.
\end{definition}

\p{Reach--Avoid Robot Safety Analysis}
We consider the \emph{worst-case} reach--avoid safety analysis for robot dynamics~\eqref{eq:dyn_sys}.
We define the ego robot's target and failure sets as ${\target := \{\state \mid \ell(\state) \geq 0\} \subseteq \reals^{\nx}}$ and ${\failure := \{\state \mid g(\state) < 0\} \subseteq \reals^{\nx}}$, where $\ell(\cdot)$ and $g(\cdot)$ are Lipschitz continuous \emph{margin functions}, encoding problem-specific safety and liveness specifications.
We use \gls{HJI} reachability analysis to capture the interplay between the best-effort controller policy $\policyc$, \ie, one that attempts to reach the target set $\target$ without entering the failure set $\failure$, and worst-case disturbance policy $\policyd$, \ie, one that prevents the controller from succeeding.
To this end, we formulate an infinite-horizon \textit{zero-sum} dynamic game with the following objective functional:
\begin{equation}
\begin{aligned}
\label{eq:zs_game}
    \cost_k^{\policyc, \policyd}(\state) := \max_{\tau \geq k} \min \left\{ \ell(\state_\tau), \min_{s \in [k, \tau]} g(\state_s) \right\}.
\end{aligned}
\end{equation}
The game's minimax solution satisfies the fixed-point Isaacs equation~\cite{isaacs1954differential}:
\begin{equation}
\label{eq:Isaacs_basic}
\begin{aligned}
    \valfunc(\state) = \min \left\{ g(\state), \max \Big\{ \ell(\state),  \max_{\ctrl \in \cset} \min_{\dstb \in \dset} \valfunc \big(\dyn (\state, \ctrl, \dstb)\big) \Big\}  \right\}.
\end{aligned}
\end{equation}
Value function $\valfunc(\cdot)$ encodes the \textit{reach--avoid set} $\reachavoid(\target, \failure) := \{\state \mid \valfunc(\state) \geq 0\}$, from which the ego agent is guaranteed a policy to safely reach the target set without entering the failure set.
Note that \eqref{eq:Isaacs_basic} implies that the disturbance has the instantaneous informational advantage and $\valfunc(\state) = \max_{\policyc} \min_{\policyd} \cost^{\policyc, \policyd}(\state)$ is known as the \textit{lower} value of the zero-sum game~\eqref{eq:zs_game}.
It corresponds to the minimal safety margin $g$ that the controller is able to maintain \emph{at all times} under the worst-case disturbance.

\section{Approach: Stackelberg--Minimax Adversarial RL}
\label{sec:approach}

\p{Soft Adversarial Actor--Critic as a Three-Player Game}
We consider a discrete-time adversarial Markov game governed by system~\eqref{eq:dyn_sys}.
The initial state is determined by a given prior distribution $\state_0 \sim \distr_0(\state)$.
At time $t$, the controller and disturbance take actions according to their stochastic policies, \ie, $\ctrl_t \sim \policyc_\policycparam(\cdot | \state_t)$, $\dstb_t \sim \policyd_\policydparam(\cdot | \state_t)$, the controller receives a bounded reward $\reward_t = \reward(\state_t, \ctrl_t, \dstb_t)$ emitted from the environment, and the disturbance receives reward $-\reward_t$.
The single-player version of this Markov game (\ie, with $\dstb_t \equiv 0$) can be solved at scale using deep \gls{RL} approaches. We will focus on both \emph{on- and off-policy actor-critic} methods; specifically \gls{A2C}~\cite{a2c_2016} and \gls{SAC} ~\cite{haarnoja2018soft}, respectively.
In the following, we present adversarial variants of the \gls{A2C} and \gls{SAC} algorithms.
We assume that the critic and actors are deep neural networks parameterized by $(\criticparam, \policycparam, \policydparam)$.
The critic aims to minimize the mean-square Bellman error loss
\begin{equation}
    \label{eq:a2c_critic}
    \criticcost(\criticparam,\policycparam,\policydparam) = \expectation_{\state \sim \traj} \left[ (\valfunc_\criticparam(\state) - \valfunc^\pi(\state))^2\right],
\end{equation}
or 
\begin{equation}
    \label{eq:sac_critic}
    \criticcost(\criticparam,\policycparam,\policydparam) = \expectation_{\xi \sim \buffer} \left[ \left(\qfunc_{\criticparam_1} (\state,\ctrl,\dstb) - \reward - \discount \qfunc_{\criticparam_2}(\state^\prime, \ctrl^\prime, \dstb^\prime) \right)^2\right],
\end{equation}
for \gls{A2C} and \gls{SAC}, respectively, in which $\valfunc$ is the value function, $\qfunc$ is the state--action value function, $\criticparam = (\criticparam_1, \criticparam_2) \in \reals^{n_\criticparam}$ is the critic's parameter, $\xi = (\state,\ctrl,\dstb,\reward,\state^\prime)$ is the transition data, $\buffer$ is a replay buffer, $\discount \in (0,1]$ is a discount factor, $\ctrl^\prime \sim \policyc_{\policycparam} (\cdot | \state^\prime)$, $\dstb^\prime \sim \policyd_{\policydparam} (\cdot | \state^\prime)$, and the function $\valfunc^\pi(\state)$ is approximated through a bootstrapped estimator (here we use generalized advantage estimation \cite{gae_lambda}).
For \gls{A2C}, the controller seeks to maximize the objective 
\begin{equation}
    \label{eq:a2c_actor}
    \cost(\criticparam, \policycparam, \policydparam) = \expectation_{\traj \sim (\policyc, \policyd)}\left[r(\state_0, \ctrl_0, \dstb_0) + \valfunc_\criticparam(\state_1)\right],
\end{equation}
while for \gls{SAC}, the controller seeks to maximize entropy-regularized control objective
\begin{equation}
\begin{aligned}
    \label{eq:sac_actor}
    &\cost(\criticparam,\policycparam,\policydparam) = \\
    &\expectation_{\state \sim \buffer} \left[ \min_{i \in \{1,2\}} \qfunc_{\criticparam_i} (\state,\tilde{\ctrl},\tilde{\dstb})
    - \entreg^\ctrl \left[\log \policyc_\policycparam(\tilde{\ctrl} | \state) - H_0^\ctrl\right]
    + \entreg^\dstb \left[\log \policyd_\policydparam(\tilde{\dstb} | \state) - H_0^\dstb\right]\right],
\end{aligned}
\end{equation}
where $\tilde{\ctrl} \sim \policyc_{\policycparam} (\cdot | \state)$, $\tilde{\dstb} \sim \policyd_{\policydparam} (\cdot | \state)$, $\entreg^\ctrl, \entreg^\dstb > 0$ are the entropy regularization coefficients, and $H_0^\ctrl$ and $H_0^\dstb$ are the minimum entropy (heuristically set to the dimension of the player action space, following \cite{haarnoja_sac_auto_temp}) of controller and disturbance policies, respectively.
Finally, the disturbance minimizes $\cost(\criticparam,\policycparam,\policydparam)$.

We cast \gls{MAGICS} training procedure defined by \eqref{eq:sac_critic}-\eqref{eq:sac_actor} as a \emph{three-player} general-sum (\ie, non-cooperative) Stackelberg game, which is modeled by the following \textit{trilevel} optimization problem:
\begin{subequations}
\label{eq:trilevel}
\begin{align}
    \min_{\criticparam} \quad &\criticcost(\criticparam,\policycparam,\policydparam) \label{eq:trilevel_critic}\\
    \text{s.t.} \quad~&\policycparam \in \textstyle\argmax_{\tilde{\policycparam}} \cost(\criticparam, \tilde{\policycparam},\bar{\policydparam}), \label{eq:trilevel_ctrl} \\
    &\text{s.t.} \quad \bar{\policydparam} \in \textstyle\argmin_{{\policydparam^\prime}} \cost(\criticparam, \policycparam,{\policydparam^\prime}), \label{eq:trilevel_dstb}
\end{align}
\end{subequations}
where the critic is the leader, followed by the controller, and the disturbance plays last, consistent with its information advantage encoded in~\eqref{eq:Isaacs_basic}. The most natural way of viewing the three-player game is as follows: in line with \cite{fiez2021global}, the two primary players are the controller $\ctrl$ and disturbance $\dstb$, while the critic sits in judgment of their gameplay.

\p{Solving the Zero-Sum Actor Game with $\tau$-GDA}
In this section, we provide a subroutine that solves the inner zero-sum game between the two actors \eqref{eq:trilevel_ctrl}-\eqref{eq:trilevel_dstb} using \gls{tGDA}~\cite{fiez2021global,fiez2021local} for a fixed critic parameter $\criticparam$.
In essence, this approach scales up the learning rate of the follower so that it can adapt faster, which leads to guaranteed local convergence to a minimax solution of the zero-sum game~\eqref{eq:trilevel_ctrl}-\eqref{eq:trilevel_dstb}.
This subroutine is summarized in Algorithm~\ref{alg:tau_gda}.

\begin{remark}
    While we focus on the local convergence analysis for game-theoretic adversarial \gls{RL} involving optimization of neural network parameters, recent work~\cite{fiez2021global} shows that, in a zero-sum game, if the maximizing player has a Polyak-Łojasiewicz (PŁ) or strongly-concave (SC) objective, then \gls{tGDA} can converge globally to a strict local minimax equilibrium.
\end{remark}

\begin{algorithm}[!ht]
\DontPrintSemicolon
\small
\caption{\gls{tGDA} for Zero-Sum Actor Game}\label{alg:tau_gda}
\SetKwInput{KwRequire}{Input}
\KwRequire{actor parameters $\left(\policycparam_0, \policydparam_0\right)$, critic parameter $\criticparam$, controller learning rate schedule $\{\lr^\ctrl_{i}\}_{i=0,1,\ldots}$, learning rate ratio $\tau_a$}
$i \gets 0$\;
\While{\textit{Not converged}}{
    $\policycparam_{i+1} \gets \policycparam_i + \lr^\ctrl_{i} \grad_\policycparam \cost(\criticparam, \policycparam_i, \policydparam_i)$\tcp*{Updates controller parameters}
    $\policydparam_{i+1} \gets \policydparam_i - \lr^\ctrl_{i} \tau_a \grad_\policydparam \cost(\criticparam, \policycparam_i, \policydparam_i)$\tcp*{Updates disturbance parameters}
    $i \gets i + 1$\;
}
\end{algorithm}

\p{MAGICS: Minimax Actors Guided by Implicit Critic Stackelberg}
We now return to the full game~\eqref{eq:trilevel}. Since the controller and the disturbance update their policy parameters using \gls{tGDA}, the critic sees them as two \emph{simultaneous} (\ie, Nash) followers: the controller performs gradient ascent on return $\cost(\criticparam, \policycparam, \policydparam)$, and the disturbance performs gradient descent on $\tau_a \cost(\criticparam, \policycparam, \policydparam)$.
In the following, we derive the update rule for the \gls{MAGICS}.
We start by providing the critic's Stackelberg learning dynamics~\cite{fiez2020implicit}.
Invoking the implicit function theorem, the total derivative of the critic's cost $\total_\criticparam \criticcost(\criticparam, \policycparam^*, \policydparam^*)$ at a minimax equilibrium $(\policycparam^*, \policydparam^*)$ is:
\begin{equation*}
    \total_\criticparam \criticcost(\criticparam, \policycparam^*, \policydparam^*) = \grad_\criticparam \criticcost(\criticparam) 
    +  \grad_\criticparam \policycparam^*(\criticparam) \grad_\policycparam \criticcost(\criticparam, \policycparam^*, \policydparam^*)
    +  \grad_\criticparam \policydparam^*(\criticparam) \grad_\policydparam \criticcost(\criticparam, \policycparam^*, \policydparam^*),
\end{equation*}
where $\policycparam^*(\criticparam) = \resmap_\policycparam(\criticparam)$ and $\policydparam^*(\criticparam) = \resmap_\policydparam(\criticparam)$ are the controller's and disturbance's rational response to the critic, respectively. 
The implicit differentiation terms $\grad_\criticparam \policycparam^*(\criticparam)$ and $\grad_\criticparam \policydparam^*(\criticparam)$ can be computed by solving the following linear system of equations:
\begin{equation*}
    \begin{aligned}
    \left\{\begin{array}{l}
    0 = \grad_{\policycparam\criticparam} \cost(\criticparam, \policycparam^*, \policydparam^*) + \grad_{\policycparam}^2 \cost(\criticparam, \policycparam^*, \policydparam^*) \grad_\criticparam \policycparam^*(\criticparam) + \grad_{\policycparam\policydparam} \cost(\criticparam, \policycparam^*, \policydparam^*) \grad_\criticparam \policydparam^*(\criticparam), \\[0.1cm]
    0 = \grad_{\policydparam\criticparam} \cost(\criticparam, \policycparam^*, \policydparam^*) + \grad_{\policydparam\policycparam} \cost(\criticparam, \policycparam^*, \policydparam^*) \grad_\criticparam \policycparam^*(\criticparam) + \grad^2_{\policydparam} \cost(\criticparam, \policycparam^*, \policydparam^*) \grad_\criticparam \policydparam^*(\criticparam).
    \end{array}\right.
    \end{aligned}
\end{equation*}
Compactly, the total derivative of the critic can be written as:
\begin{equation}
\label{eq:critic_total_deriv_compact}
    \total_\criticparam \criticcost(\criticparam, \policycparam, \policydparam) = \grad_\criticparam \criticcost(\criticparam, \policycparam, \policydparam) 
    -  h_1(\criticparam, \policycparam, \policydparam)^\top H(\criticparam, \policycparam, \policydparam)^{-1} h_2(\criticparam, \policycparam, \policydparam),
\end{equation}
where
\begin{equation*}
    h_1(\criticparam, \policycparam, \policydparam) =
    \begin{bmatrix}
        \grad_{\criticparam\policycparam} \cost(\criticparam, \policycparam, \policydparam) \\
        \grad_{\criticparam\policydparam} \cost(\criticparam, \policycparam, \policydparam)
    \end{bmatrix}, \quad
    h_2(\criticparam, \policycparam, \policydparam) =
    \begin{bmatrix}
    \grad_\policycparam \criticcost(\criticparam, \policycparam, \policydparam) \\
    \grad_\policydparam \criticcost(\criticparam, \policycparam, \policydparam)
    \end{bmatrix},
\end{equation*}
and
\begin{equation*}
    H(\criticparam, \policycparam, \policydparam) =
    \begin{bmatrix}
        \grad_{\policycparam}^2 \cost(\criticparam, \policycparam, \policydparam) & \quad \grad_{\policycparam\policydparam} \cost(\criticparam, \policycparam, \policydparam) \\
        \grad_{\policydparam\policycparam} \cost(\criticparam, \policycparam, \policydparam) & \quad \grad^2_{\policydparam} \cost(\criticparam, \policycparam, \policydparam)
    \end{bmatrix}.
\end{equation*}
Each term in the Stackelberg gradient update rule can be computed directly and estimated by samples. 

For MAGICS-\gls{A2C}, the objective functions are defined in expectation over the distribution induced by the players' current policies. Hence, custom gradient estimators are required, as the Stackelberg gradient of \eqref{eq:a2c_critic} is not straightforward. The estimator is given below, and the proof merely requires expanding the value and state--action value function definitions recursively.

\begin{algorithm}[!ht]
\DontPrintSemicolon
\small
\caption{Stackelberg--Minimax Adversarial RL}\label{alg:MAGICS}
\SetKwInput{KwRequire}{Input}
\KwRequire{parameters $\left(\criticparam_0, \policycparam_0, \policydparam_0\right)$, critic learning rate schedule $\{\lr^c_i\}_{i=0,1,\ldots}$, controller learning rate schedule $\{\lr^\ctrl_i\}_{i=0,1,\ldots}$, actor learning rate ratio $\tau_a$}
$i \gets 0$\;
\While{\textit{Not converged}}{
    $\criticparam_{i+1} \gets \criticparam_i - \lr^c_i \total_\criticparam \criticcost(\criticparam_i, \policycparam_i, \policydparam_i)$\tcp*{Updates critic parameters}
    $(\policycparam_{i+1}, \policydparam_{i+1}) \gets$ \gls{tGDA}$(\criticparam_i, \policycparam_i, \policydparam_i, \lr^\ctrl_i, \tau_a)$\tcp*{Updates actor parameters}
    $i \gets i + 1$\;
}
\end{algorithm}

\begin{theorem}
\label{thm:critic_partial}
    Given a Markov game with actor parameters $(\policycparam, \policydparam)$ and shared critic parameters $\criticparam$, if critic has objective function $\criticcost(\criticparam, \policycparam, \policydparam)$ defined in \eqref{eq:a2c_critic}, then $\grad_\policycparam \criticcost(\criticparam, \policycparam, \policydparam)$ is given by 
    \begin{equation}
    \begin{aligned}
        \grad_\policycparam \criticcost(&\criticparam, \policycparam, \policydparam) = \\
        &\expectation_{\traj \sim (\policycparam, \policydparam)} \left[2 \sum_{t=0}^T \discount^t \grad_\policycparam \log \policyc_\policycparam(\ctrl_t | \state_t) \left(\valfunc_\criticparam(\state_0) - \valfunc^\policy(\state_0)\right)\qfunc^\policy(\state_t, \ctrl_t, \dstb_t)\right].
    \end{aligned}
    \end{equation}
\end{theorem}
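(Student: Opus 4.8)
The plan is to peel off the controller's influence on the critic loss \eqref{eq:a2c_critic} by the chain rule and then reduce the statement to the policy gradient theorem, which is exactly the recursive expansion alluded to in the statement. Reading the Bellman error as evaluated at the initial state $\state_0 \sim \distr_0$, which is independent of $\policycparam$, the outer expectation commutes with $\grad_\policycparam$; and since the critic network $\valfunc_\criticparam$ carries no dependence on $\policycparam$,
\begin{equation*}
  \grad_\policycparam \criticcost(\criticparam,\policycparam,\policydparam) = 2\,\expectation_{\state_0}\!\left[\big(\valfunc_\criticparam(\state_0) - \valfunc^\policy(\state_0)\big)\,\grad_\policycparam\big(\valfunc_\criticparam(\state_0) - \valfunc^\policy(\state_0)\big)\right],
\end{equation*}
so the entire claim hinges on an expression for $\grad_\policycparam \valfunc^\policy(\state_0)$, the sensitivity of the true on-policy value to the controller's parameters.

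For this I would use the Bellman recursions, with $\ctrl \sim \policyc_\policycparam(\cdot|\state)$ and $\dstb \sim \policyd_\policydparam(\cdot|\state)$,
\begin{equation*}
  \valfunc^\policy(\state) = \expectation_{\ctrl,\dstb}\!\big[\qfunc^\policy(\state,\ctrl,\dstb)\big], \qquad \qfunc^\policy(\state,\ctrl,\dstb) = \reward(\state,\ctrl,\dstb) + \discount\,\valfunc^\policy\big(\dyn(\state,\ctrl,\dstb)\big),
\end{equation*}
and differentiate. Since $\reward$ and $\dyn$ do not depend on $\policycparam$, we have $\grad_\policycparam \qfunc^\policy(\state,\ctrl,\dstb) = \discount\,\grad_\policycparam\valfunc^\policy(\dyn(\state,\ctrl,\dstb))$, while the log-derivative identity $\grad_\policycparam\policyc_\policycparam = \policyc_\policycparam\,\grad_\policycparam\log\policyc_\policycparam$ handles the policy factor (only $\policyc_\policycparam$ sees $\policycparam$, so the disturbance's stochasticity is absorbed into the environment), giving
\begin{equation*}
  \grad_\policycparam\valfunc^\policy(\state) = \expectation_{\ctrl,\dstb}\!\big[\grad_\policycparam\log\policyc_\policycparam(\ctrl|\state)\,\qfunc^\policy(\state,\ctrl,\dstb) + \discount\,\grad_\policycparam\valfunc^\policy\big(\dyn(\state,\ctrl,\dstb)\big)\big].
\end{equation*}
Iterating this identity along trajectories generated by $(\policycparam,\policydparam)$ from $\state_0$ unrolls the $\discount$-weighted remainder and leaves $\grad_\policycparam\valfunc^\policy(\state_0) = \expectation_{\traj\sim(\policycparam,\policydparam)}\big[\sum_{t=0}^{T}\discount^t\,\grad_\policycparam\log\policyc_\policycparam(\ctrl_t|\state_t)\,\qfunc^\policy(\state_t,\ctrl_t,\dstb_t)\big]$ --- the policy gradient theorem, here for the two-player game with the disturbance's policy held fixed. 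Substituting into the chain-rule expression above and moving the state-only scalar $\valfunc_\criticparam(\state_0) - \valfunc^\policy(\state_0)$ inside the trajectory expectation merges the nested expectations into a single $\expectation_{\traj\sim(\policycparam,\policydparam)}[\,\cdot\,]$, which is the claimed identity.

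I do not expect a deep obstacle --- the structure is the textbook policy gradient theorem transplanted to a zero-sum game --- so the main things to get right are the bookkeeping of the recursive unrolling and the justification for differentiating under the expectation/summation at each step. I would discharge the latter with the paper's standing smoothness assumption ($\policyc_\policycparam \in C^q$, $q\geq 2$), boundedness of $\reward$, the finite horizon $T$ (equivalently $\discount<1$ in the infinite-horizon reading), and compactness of $\cset,\dset$, which together supply an integrable dominating function. The companion identity for $\grad_\policydparam\criticcost$ then follows by the mirror-image computation, swapping the roles of $\policyc_\policycparam$ and $\policyd_\policydparam$.
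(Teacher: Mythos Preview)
Your proposal is correct and matches the paper's own approach: the paper does not give a detailed proof but states that it ``merely requires expanding the value and state--action value function definitions recursively,'' which is precisely the policy-gradient-theorem unrolling you carry out. One bookkeeping point to watch: since $\valfunc_\criticparam$ is independent of $\policycparam$, the chain rule yields $\grad_\policycparam(\valfunc_\criticparam-\valfunc^\policy)^2 = -2(\valfunc_\criticparam-\valfunc^\policy)\grad_\policycparam\valfunc^\policy$, so you should verify the overall sign when you finalize the write-up.
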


MAGICS-\gls{SAC}, on the other hand, is an off-policy \gls{RL} scheme, with critic's and actors' objective defined as an expectation over an arbitrary distribution from a replay buffer. An unbiased estimator for each term in the Stackelberg gradient can be computed directly from samples using automatic differentiation, \eg,  
\begin{equation}
\label{eq:unbias}
\begin{aligned}
\grad_\criticparam \criticcost(\criticparam, \policycparam, \policydparam) 
&= \expectation_{\xi \sim \buffer} \left[ \grad_\criticparam \left(\left(\qfunc_{\criticparam_1} (\state,\ctrl,\dstb) - \reward - \discount \qfunc_{\criticparam_2}(\state^\prime, \ctrl^\prime, \dstb^\prime) \right)^2\right)\right] \\
&\approx \frac{1}{N} \sum_{n=1}^N \grad_\criticparam \left(\left(\qfunc_{\criticparam_1} (\state_{n},\ctrl_{n},\dstb_{n}) - \reward_{n} - \discount \qfunc_{\criticparam_2}(\state^\prime_{n}, \ctrl^\prime_{n}, \dstb^\prime_{n}) \right)^2\right).
\end{aligned}
\end{equation}
For the second term in \eqref{eq:critic_total_deriv_compact}, we can estimate each term $h_1(\cdot)$, $H(\cdot)$, and $h_2(\cdot)$ individually using samples from the replay buffer and resetting the simulator~\cite[Chapter 11]{sutton2018reinforcement}.
In order to numerically determine convergence, we may check the magnitude of the gradient norms, \ie, $\|\total_\criticparam \criticcost(\cdot)\| \leq \epsilon^c$, $\|\grad_\policycparam \cost(\cdot)\| \leq \epsilon^\ctrl$, $\|\grad_\policydparam \cost(\cdot)\| \leq \epsilon^\dstb$, where $\epsilon^c$, $\epsilon^\ctrl$, $\epsilon^\dstb$ are small thresholds.
The overall procedure of \gls{MAGICS} is summarized in Algorithm~\ref{alg:MAGICS}.

\p{Complexity and Gradient Computation Details}
The critic's Stackelberg gradient requires an \gls{iHvp} and a \gls{jvp}. The latter can be computed directly through a call to \texttt{torch}'s automatic differentiation engine \texttt{autograd.grad}. The Hessian $H$ in~\eqref{eq:critic_total_deriv_compact} is a 2 $\times$ 2 block matrix composed of differentiating the loss functions of the critic and actor(s) against their respective parameters; accordingly, implicit representations using Hessian-vector products cannot be used. If $h_1$'s differentiation against the player parameters is done first, assembling $H$ requires three additional \texttt{autograd.grad} calls---one for each of the block-diagonal elements as well as the $(1,2)$-entry due to symmetry. $h_2$ can be assembled straightforwardly with two calls to \texttt{autograd.grad}, and the \gls{iHvp} can be computed using \texttt{torch}'s built-in linear systems solver \texttt{linalg.solve}. The \gls{jvp} between $h_1$ and the \gls{iHvp} can be computed in one final \texttt{autograd.grad} for a total of eight calls. For MAGICS-\gls{A2C}, the sequence of eight \texttt{autograd} calls is not prohibitively slow because of the custom gradient estimators---the closed-form estimator with the direct log-likelihood structure ensures that the differentiation does not need to traverse a large computational graph. 
In contrast, the eight \texttt{autograd} calls require massive computational expenditure for MAGICS-\gls{SAC}.
To enhance computational efficiency, in our implementation of MAGICS-\gls{SAC}, we use an empirical Fisher approximation to the Hessian \cite{ef1,ef2} for computing~\eqref{eq:critic_total_deriv_compact}.
The direct log-likelihood structure and Einstein summation of rank-one matrices can be computed in time of a similar order of magnitude to that of MAGICS-\gls{A2C}, which ameliorates the computational burden.

For MAGICS-\gls{A2C}, each Stackelberg gradient step can be computed in $\sim5.5$  times the cost of a baseline gradient step.
For MAGICS-\gls{SAC}, the full Stackelberg gradient incurs $\sim10.5$ times the computational cost of the baseline gradient, whereas the empirical Fisher implementation significantly reduces this, requiring only about $\sim1.8$ times the baseline cost.
We find empirically the increase in computation by incorporating the Stackelberg gradient in MAGICS-\gls{A2C} and MAGICS-\gls{SAC} (with empirical Fisher approximation) to be bearable even for high-dimensional systems.

\p{Convergence Analysis of \gls{MAGICS}}
Due to sample-based approximation of gradients, the \gls{MAGICS} procedure can be modeled by the discrete-time dynamical system:
\begin{subequations}
\label{eq:magics_dt_sys}
\begin{align}
    \criticparam_{t+1} &= \criticparam_t - \lr^c_t \left(\total_\criticparam \criticcost(\criticparam_t, \policycparam_t, \policydparam_t) + \noise_{\criticparam,t} \right), \label{eq:magics_dt_sys:critic} \\
    \policycparam_{t+1} &= \policycparam_t + \lr^\ctrl_t \left(\grad_\policycparam \cost(\criticparam_{t+1}, \policycparam_t, \policydparam_t) + \noise_{\policycparam,t} \right), \label{eq:magics_dt_sys:ctrl} \\
    \policydparam_{t+1} &= \policydparam_t - \tau_a \lr^\ctrl_t \left(\grad_\policydparam \cost(\criticparam_{t+1}, \policycparam_t, \policydparam_t) + \noise_{\policydparam,t} \right). \label{eq:magics_dt_sys:dstb}
\end{align}
\end{subequations}
In the next, we show that the \gls{MAGICS} procedure, modeled by system~\eqref{eq:magics_dt_sys}, locally converges to a game-theoretically meaningful equilibrium solution.
We start by showing that \gls{tGDA} can \textit{robustly} converge to a minimax equilibrium when the critic parameter $\criticparam$ varies within a local region, \ie, $\criticparam \in \region_{\bar{\criticparam}} := \{\tilde{\criticparam} \mid \|\tilde{\criticparam} - \bar{\criticparam}\| < \epsilon_\criticparam \}$.
This is formalized in the following Lemma, which extends the two-player \gls{tGDA} local convergence result~\cite[Theorem~1]{fiez2021local} to additionally account for a ``meta'' leader (\ie, the critic).

\begin{assumption}
\label{assump:main}
We assume that the following hold.
    \begin{enumerate}[(a)]
        \item The maps $\total_\criticparam \criticcost(\cdot)$, $\grad_\policycparam \cost(\cdot)$, $\grad_\policydparam \cost(\cdot)$ are $L_1$, $L_2$, $L_3$ Lipschitz, and $\|\total_\criticparam \criticcost\| < \infty$.
        \item The critic learning rates are square summable but not summable, \ie, $\sum_{k} \lr^i_k = \infty$, $\sum_{k} (\lr^i_k)^2 < \infty$ for $i \in \{c, \ctrl\}$.
        \item The noise processes $\{\noise_{\criticparam,t}\}$, $\{\noise_{\policycparam,t}\}$, and $\{\noise_{\policydparam,t}\}$ are zero-mean, martingale difference sequences (c.f.~\cite[Assumption~1]{zheng2022stackelberg}).
    \end{enumerate}
\end{assumption}

\begin{lemma}[Robust Stability of DSE under \gls{tGDA}]
\label{lem:tGDA_stab}
Consider the zero-sum game between the controller and disturbance  $(-\cost, \cost)$ parameterized by $\criticparam \in \region_{\bar{\criticparam}} = \{\tilde{\criticparam} \mid \|\tilde{\criticparam} - \bar{\criticparam}\| < \epsilon_\criticparam \}$.
Let \autoref{assump:main} hold.
If $\jpolicyparam^* := (\policycparam^*,\policydparam^*)$ is a \gls{DSE} (minimax equilibrium), then there exists a $\tau^* \in (0, \infty)$ such that, for all $\tau \in (\tau^*, \infty)$ and for all $\criticparam \in \region_{\bar{\criticparam}}$, $\{y_k\}$ almost surely converges locally asymptotically to $\jpolicyparam^*$ 
\end{lemma}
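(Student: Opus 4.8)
The plan is to freeze the critic parameter $\criticparam$ --- which is legitimate inside Algorithm~\ref{alg:tau_gda}, where $\criticparam$ is held fixed, with the two-timescale coupling to the critic handled separately in the main theorem --- and to read \eqref{eq:magics_dt_sys:ctrl}--\eqref{eq:magics_dt_sys:dstb} as a single stochastic-approximation recursion in $y = (\policycparam, \policydparam)$. Under Assumption~\ref{assump:main}(a)--(c) this recursion has the well-posed limiting ODE $\dot\policycparam = \grad_\policycparam \cost(\criticparam, \policycparam, \policydparam)$, $\dot\policydparam = -\tau\,\grad_\policydparam \cost(\criticparam, \policycparam, \policydparam)$, so by Borkar's ODE method almost-sure local convergence of $\{y_k\}$ follows once $y^*$ is shown to be a locally asymptotically stable equilibrium of this ODE \emph{uniformly} over $\criticparam \in \region_{\bar\criticparam}$.

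The core is a linearization step. At $\criticparam = \bar\criticparam$ the point $y^*$ is a DSE, so $\grad_\policycparam \cost = \grad_\policydparam \cost = 0$, $\schur(\jacobian_\cost) \prec 0$, and $\grad^2_\policydparam \cost \succ 0$ there. The Jacobian of the limiting flow at $y^*$ is $J_\tau(\criticparam) = \diag(I, \tau I)\big(-\jacobian_\cost(\criticparam, y^*)\big)$, and the two-player analysis of \cite[Theorem~1]{fiez2021local} --- specifically the Schur-complement eigenvalue computation it rests on --- gives a finite $\tau^\circ$ with $J_\tau(\bar\criticparam)$ Hurwitz for all $\tau > \tau^\circ$. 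To robustify this I would use that (i) $\cost \in C^q$ with $q \geq 2$, so each block of $\jacobian_\cost(\criticparam, y^*)$ is continuous in $\criticparam$; (ii) the DSE conditions are strict inequalities, hence open; and (iii) $\overline{\region_{\bar\criticparam}}$ is compact, while the timescale threshold of \cite{fiez2021local} is an explicit continuous function of the Hessian blocks. Continuity of the spectrum in the matrix entries then yields $\epsilon_\criticparam > 0$ and a single $\tau^* < \infty$ such that $J_\tau(\criticparam)$ is Hurwitz for every $\criticparam \in \region_{\bar\criticparam}$ and $\tau > \tau^*$; here $y^*$ should be read either as held stationary throughout $\region_{\bar\criticparam}$ or, more carefully, as the locally unique $C^{q-1}$ branch $y^*(\criticparam)$ of differential Stackelberg equilibria given by the implicit function theorem (applicable because $\jacobian_\cost$ is invertible at a hyperbolic equilibrium), whose graph stays $\bigoh(\epsilon_\criticparam)$-close to $y^*$.

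With uniform hyperbolicity in hand, I would solve $J_\tau(\criticparam)^\top P(\criticparam) + P(\criticparam)J_\tau(\criticparam) = -I$ for $P(\criticparam) \succ 0$ depending continuously on $\criticparam$; compactness gives uniform two-sided bounds on $P(\criticparam)$, so $V(y) = (y - y^*)^\top P(\criticparam)(y - y^*)$ is a common quadratic Lyapunov function and furnishes a $\criticparam$-independent radius $\rho > 0$ with $\ball(y^*, \rho)$ inside the region of attraction of $y^*$ for every ODE in the family. Finally I would invoke the local ``lock-in'' theorem for stochastic approximation --- using Assumption~\ref{assump:main}(b) for the Robbins--Monro step sizes, (c) for the martingale-difference noise, and the Lipschitz bounds in (a) for the remaining regularity, much as in the analysis of \cite{zheng2022stackelberg} --- so that, started inside $\ball(y^*, \rho)$, the iterates $\{y_k\}$ remain in a smaller ball with probability approaching one as the initialization tends to $y^*$ and on that event converge almost surely to $y^*$. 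Choosing $\tau_a > \tau^*$ completes the proof.

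I expect the hardest step to be the uniform hyperbolicity claim: lifting the single-point, fixed-$\criticparam$ Hurwitz conclusion of \cite{fiez2021local} to one that holds simultaneously over a neighborhood of $\bar\criticparam$. This requires a uniform timescale threshold $\tau^*$ (available because that threshold depends continuously on the Hessian data and $\overline{\region_{\bar\criticparam}}$ is compact) and some care about whether $y^*$ literally persists as an equilibrium or must be tracked as an implicit branch $y^*(\criticparam)$; the ODE-method reduction and the stochastic lock-in argument are then routine.
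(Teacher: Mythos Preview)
Your proposal is correct and follows the same high-level architecture as the paper: pass to the limiting ODE, invoke \cite[Theorem~1]{fiez2021local} for the Hurwitz property at $\bar\criticparam$, uniformize over $\region_{\bar\criticparam}$, and finish with stochastic-approximation lock-in. The substantive difference is in the uniformization step. The paper exploits the explicit form of the timescale threshold from \cite{fiez2021local}, namely $\tau_{\bar\criticparam} = \lambda^+_{\max}(Q_{\bar\criticparam})$ for a concrete matrix $Q_{\criticparam}$ built from the Hessian blocks; it argues $Q_\criticparam$ is Lipschitz in $\criticparam$ and applies Weyl's inequality to obtain the quantitative bound $\tau^* = \lambda^+_{\max}(Q_{\bar\criticparam}) + L_{\bar\criticparam}\epsilon_\criticparam$. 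Your route is softer: you take the threshold as a continuous function of the Hessian data and maximize it over the compact set $\overline{\region_{\bar\criticparam}}$. Both are valid; the paper's gives an explicit formula, yours avoids tracking the internal structure of $Q_\criticparam$. Your explicit Lyapunov-equation construction is likewise sound but more than the paper does---it simply cites \cite[Theorem~H.1]{fiez2021local} and the internally-chain-transitive-invariant-set machinery of \cite{borkar2009stochastic} for the stochastic step. Finally, your caveat about whether $y^*$ persists literally as an equilibrium or must be tracked as an implicit branch $y^*(\criticparam)$ is a genuine subtlety that the paper's proof elides (it writes $\jacobian_{\tau_a}(y^*;\criticparam)$ for varying $\criticparam$ without addressing whether $y^*$ remains critical); your treatment is the more careful one here.
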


\begin{proof}
    Given critic parameter $\bar{\criticparam}$, the continuous-time limiting system of the noise-free \gls{tGDA} updates~\eqref{eq:magics_dt_sys:ctrl}-\eqref{eq:magics_dt_sys:dstb} is $\dot{\jpolicyparam} = (\grad_\policycparam \cost(\bar{\criticparam}, \jpolicyparam), - \tau_a \grad_\policydparam \cost(\bar{\criticparam}, \jpolicyparam))$, with Jacobian denoted as $\jacobian_{\tau_a}(\jpolicyparam; \bar{\criticparam})$.
    By~\cite[Theorem~1]{fiez2021local}, it is possible to explicitly construct $\tau_{\bar{\criticparam}} = \lambda^+_{\max}(Q_{\bar{\criticparam}}(\jacobian_{\tau_a}(\jpolicyparam^*; \bar{\criticparam})))$, such that $\jacobian_{\tau_a}(\jpolicyparam^*; \bar{\criticparam})$ is Hurwitz, \ie, $\jpolicyparam^*$ is locally exponentially stable, for all $\tau \in (\tau_{\bar{\criticparam}}, \infty)$.
    Here, matrix $Q_{\bar{\criticparam}}$ is computed based on blocks of Jacobian $\jacobian_{\tau_a}(\jpolicyparam^*; \bar{\criticparam})$, and is $L_{\bar{\criticparam}}$ Lipschitz continuous in $\criticparam$.
    Next, we extend this result to construct a finite learning rate ratio such that $\jacobian_{\tau_a}(\jpolicyparam^*; \criticparam)$ is Hurwitz for all $\criticparam \in \region_{\bar{\criticparam}}$.
    For any $\criticparam \in \region_{\bar{\criticparam}}$, define $\Delta Q_{\criticparam} = Q_{\criticparam} - Q_{\bar{\criticparam}}$.
    By Weyl's inequality, we have $\lambda^+_{\max}(Q_{\criticparam}) \leq \lambda^+_{\max}(Q_{\bar{\criticparam}}) + \|\Delta Q_{\criticparam}\| \leq \lambda^+_{\max}(Q_{\bar{\criticparam}}) + L_{\bar{\criticparam}}\|\criticparam - \bar{\criticparam}\| < \lambda^+_{\max}(Q_{\bar{\criticparam}}) + L_{\bar{\criticparam}} \epsilon_{\criticparam}$.
    Therefore, $\jpolicyparam^*$ is locally exponentially stable for all $\tau \in (\tau^*, \infty)$ and all $\criticparam \in \region_{\bar{\criticparam}}$, where $\tau^* = \lambda^+_{\max}(Q_{\bar{\criticparam}}) + L_{\bar{\criticparam}} \epsilon_{\criticparam}$.
    The remainder of the proof follows that of~\cite[Theorem H.1]{fiez2021local} and classical results in stochastic approximation theory~\cite{borkar2009stochastic}.
    That is, there exists a neighborhood $\region_{\jpolicyparam^*}$ around $\jpolicyparam^*$ such that, from an initial point $\jpolicyparam_0 \in \region_{\jpolicyparam^*}$, sequence $\{\jpolicyparam_k\}$ converges to an internally chain transitive invariant set contained in $\region_{\jpolicyparam^*}$ almost surely for all $\criticparam \in \region_{\bar{\criticparam}}$, and the only such invariant set contained in $\region_{\jpolicyparam^*}$ is $\jpolicyparam^*$.
    \qed
\end{proof}

With \gls{tGDA}, we can not only construct a learning rate ratio $\tau^*$ to ensure local convergence near a \gls{DSE}/minimax equilibrium, but also find a $\tau_0$ such that a non-equilibrium critical point is unstable, and thereby can be avoided with arbitrarily small perturbation.
This is formalized in the following lemma, of which the proof resembles that of~\autoref{lem:tGDA_stab} and~\cite[Theorem~2]{fiez2021local}.

\begin{lemma}[Instability of Spurious Critical Points under \gls{tGDA}]
    \label{lem:tGDA_instab}
    Consider zero-sum game $(-\cost, \cost)$ parameterized by $\criticparam \in \region_{\bar{\criticparam}}$.
    If $\jpolicyparam^* := (\policycparam^*,\policydparam^*)$ is a critical point, \ie, $\grad_\policycparam \cost(\bar{\criticparam}, \jpolicyparam^*) = 0$, $\grad_\policydparam \cost(\bar{\criticparam}, \jpolicyparam^*) = 0$, $\det \grad^2_\policydparam \cost(\bar{\criticparam}, \jpolicyparam^*) \neq 0$, but not a \gls{DSE}, then there exists a $\tau_0 \in (0, \infty)$ such that, for all $\tau \in (\tau_0, \infty)$ and for all $\criticparam \in \region_{\bar{\criticparam}}$, $\jpolicyparam^*$ is an unstable equilibrium of the limiting system $\dot{\jpolicyparam} = (\grad_\policycparam \cost(\criticparam, \jpolicyparam), - \tau_a \grad_\policydparam \cost(\criticparam, \jpolicyparam))$.
\end{lemma}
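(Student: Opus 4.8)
The plan is to follow the two-step template of \autoref{lem:tGDA_stab}, but with stability and instability exchanged: first invoke the two-player instability result \cite[Theorem~2]{fiez2021local} for a \emph{fixed} critic parameter $\bar\criticparam$, and then propagate the resulting spectral property uniformly over the neighborhood $\region_{\bar\criticparam}$ using Lipschitz dependence on $\criticparam$.

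First, fix the critic parameter at $\bar\criticparam$ and consider the noise-free \gls{tGDA} limiting system $\dot\jpolicyparam = (\grad_\policycparam \cost(\bar\criticparam, \jpolicyparam), -\tau_a \grad_\policydparam \cost(\bar\criticparam, \jpolicyparam))$ with Jacobian $\jacobian_{\tau_a}(\jpolicyparam; \bar\criticparam)$. Since $\jpolicyparam^*$ is a joint critical point of the individual gradient field with $\det \grad^2_\policydparam \cost(\bar\criticparam, \jpolicyparam^*) \neq 0$ but is \emph{not} a \gls{DSE}, at least one second-order condition is violated: either $\grad^2_\policydparam \cost(\bar\criticparam, \jpolicyparam^*)$ has a negative eigenvalue (nonsingularity rules out a zero eigenvalue), or the Schur complement $\schur(\jacobian_\cost(\bar\criticparam, \jpolicyparam^*))$ fails to be negative definite. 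In either case, \cite[Theorem~2]{fiez2021local} yields an explicitly constructible threshold $\tau_{\bar\criticparam} \in (0,\infty)$ such that for every $\tau_a > \tau_{\bar\criticparam}$ the matrix $\jacobian_{\tau_a}(\jpolicyparam^*; \bar\criticparam)$ has at least one eigenvalue with strictly positive real part; equivalently, $\jpolicyparam^*$ is a hyperbolic, linearly unstable equilibrium of the limiting ODE at $\criticparam = \bar\criticparam$.

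Second, extend this to all $\criticparam \in \region_{\bar\criticparam}$. As in \autoref{lem:tGDA_stab}, $\tau_{\bar\criticparam}$ is a spectral quantity built from the blocks of $\jacobian_{\tau_a}(\jpolicyparam^*; \cdot)$, which are Lipschitz in $\criticparam$ by smoothness (\autoref{assump:main}(a)); a Weyl-inequality argument identical in form to the one in \autoref{lem:tGDA_stab} produces a uniform threshold $\tau_0 = \tau_{\bar\criticparam} + L_{\bar\criticparam}\epsilon_\criticparam$. Moreover, the unstable eigenvalue found at $\bar\criticparam$ has real part bounded below by some $\mu > 0$; since eigenvalues vary continuously with matrix entries, shrinking $\epsilon_\criticparam$ if necessary guarantees that $\jacobian_{\tau_a}(\jpolicyparam^*; \criticparam)$ retains an eigenvalue with real part at least $\mu/2 > 0$ for every $\criticparam \in \region_{\bar\criticparam}$ and every $\tau_a > \tau_0$. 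By Lyapunov's indirect (first) method, $\jpolicyparam^*$ is then an unstable equilibrium of $\dot\jpolicyparam = (\grad_\policycparam \cost(\criticparam, \jpolicyparam), -\tau_a \grad_\policydparam \cost(\criticparam, \jpolicyparam))$ for all $\criticparam \in \region_{\bar\criticparam}$, which is the claim. Combined with \autoref{assump:main}(c) and classical ``avoidance of traps'' results in stochastic approximation~\cite{borkar2009stochastic}, this also implies the noisy \gls{tGDA} iterates escape $\jpolicyparam^*$ almost surely.

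The main obstacle is the uniformization step: ensuring the unstable mode does not migrate onto the imaginary axis as $\criticparam$ ranges over $\region_{\bar\criticparam}$. This is the instability analogue of the Weyl-inequality estimate in \autoref{lem:tGDA_stab}, and beyond bookkeeping of Lipschitz constants it requires, as in \cite[Theorem~2]{fiez2021local}, separate treatment of the two ways $\jpolicyparam^*$ can fail to be a \gls{DSE} — an indefinite $\grad^2_\policydparam \cost$ versus a non-negative-definite Schur complement — since the explicit construction of $\tau_{\bar\criticparam}$, and hence the lower bound $\mu$, differs between the two cases.
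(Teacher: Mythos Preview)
Your proposal is correct and matches the paper's approach exactly: the paper does not spell out a proof for this lemma, stating only that it ``resembles that of \autoref{lem:tGDA_stab} and~\cite[Theorem~2]{fiez2021local}'', which is precisely your two-step template of invoking the fixed-$\bar\criticparam$ instability result from \cite[Theorem~2]{fiez2021local} and then uniformizing over $\region_{\bar\criticparam}$ via the Lipschitz/Weyl estimate of \autoref{lem:tGDA_stab}. Your sketch is in fact more detailed than the paper's own treatment.
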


Now, we are ready to state our main result, which shows that the \gls{MAGICS} procedure converges locally to a \gls{DSE} of game~\eqref{eq:trilevel}.

\begin{theorem}[Convergence of \gls{MAGICS}]
\label{thm:MAGICS}
    Consider the general-sum game $(\criticcost, -\cost, \cost)$ defined in~\eqref{eq:trilevel}.
    Let \autoref{assump:main} hold.
    If $(\criticparam^*,\policycparam^*,\policydparam^*)$ is a \gls{DSE} and $\lr^c_i = \smalloh(\lr^\ctrl_i)$,  there exists a $\tau_a^* \in (0, \infty)$  
    and a neighbourhood $\region$ around $(\criticparam^*,\policycparam^*,\policydparam^*)$ such that, for all $(\criticparam_0, \policycparam_0, \policydparam_0) \in U$ and all $\tau_a \in (\tau_a^*, \infty)$, iterates $(\criticparam_t, \policycparam_t, \policydparam_t)$, \ie, state of system~\eqref{eq:magics_dt_sys}, converge asymptotically almost surely to $(\criticparam^*,\policycparam^*,\policydparam^*)$.
\end{theorem}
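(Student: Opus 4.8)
\textbf{Overall strategy.} The plan is to treat MAGICS as a two-timescale stochastic approximation with three blocks: the slow ``meta-leader'' critic $\criticparam$ and the fast inner block $\jpolicyparam = (\policycparam, \policydparam)$ running $\tau$-GDA. The condition $\lr^c_i = \smalloh(\lr^\ctrl_i)$ enforces the timescale separation, so by standard two-timescale arguments (Borkar~\cite{borkar2009stochastic}) the fast block sees a quasi-static $\criticparam$, while the slow block sees the fast block as already equilibrated. First I would invoke \autoref{lem:tGDA_stab} to produce a $\tau^*$ such that, uniformly over $\criticparam \in \region_{\bar{\criticparam}}$, the inner $\tau$-GDA dynamics have $\jpolicyparam^*(\criticparam)$ as a locally exponentially stable equilibrium; combined with \autoref{lem:tGDA_instab}, spurious inner critical points are unstable and hence avoided almost surely under the martingale-difference noise. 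This gives a well-defined, smooth (by the implicit function theorem, using the nonsingularity of the Hessian $H$ from the $\prec 0$ / $\succ 0$ conditions in the DSE definition) local response map $\criticparam \mapsto (\policycparam^*(\criticparam), \policydparam^*(\criticparam))$.

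\textbf{Reduction to the slow dynamics.} With the fast block tracked, the effective slow ODE is $\dot{\criticparam} = -\total_\criticparam \criticcost(\criticparam, \policycparam^*(\criticparam), \policydparam^*(\criticparam))$, where the total derivative is exactly the Stackelberg gradient \eqref{eq:critic_total_deriv_compact}, its implicit terms being well-defined precisely because $H(\criticparam,\policycparam^*,\policydparam^*)$ is invertible on $\region_{\bar\criticparam}$. Since $(\criticparam^*,\policycparam^*,\policydparam^*)$ is a DSE of the full trilevel game \eqref{eq:trilevel}, the critic's Stackelberg first-order condition $\total_\criticparam \criticcost = 0$ holds at $\criticparam^*$ and the corresponding second-order (Schur-complement) condition makes the Jacobian of the slow ODE at $\criticparam^*$ Hurwitz; hence $\criticparam^*$ is locally exponentially stable for the slow ODE. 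Then I would apply the two-timescale stochastic approximation convergence theorem: under \autoref{assump:main}(a) (Lipschitzness and boundedness of all three update maps), (b) (the Robbins--Monro step-size conditions $\sum \lr = \infty$, $\sum \lr^2 < \infty$), and (c) (zero-mean martingale-difference noise), the coupled iterates $(\criticparam_t, \policycparam_t, \policydparam_t)$ converge almost surely to the internally chain-transitive invariant sets of the combined flow, and locally around $(\criticparam^*,\policycparam^*,\policydparam^*)$ the only such set is the point itself. Choosing $\tau_a^* = \tau^*$ from \autoref{lem:tGDA_stab} and $\region$ small enough that it is contained in both the inner basin $\region_{\jpolicyparam^*}$ and the slow basin around $\criticparam^*$, and that $\region$ projects into $\region_{\bar\criticparam}$, closes the argument.

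\textbf{Key steps in order.} (1) Use \autoref{lem:tGDA_stab}/\autoref{lem:tGDA_instab} to get uniform-in-$\criticparam$ stability of $\jpolicyparam^*(\criticparam)$ and instability of spurious points; (2) via the implicit function theorem, establish the local smoothness of the response map and the invertibility of $H$, so that $\total_\criticparam\criticcost$ is well-defined and continuous; (3) identify the slow ODE, verify $\total_\criticparam\criticcost(\criticparam^*,\cdot)=0$ and Hurwitz Jacobian there from the DSE conditions; (4) invoke the two-timescale SA theorem with the timescale separation $\lr^c_i=\smalloh(\lr^\ctrl_i)$ and \autoref{assump:main} to conclude a.s.\ local convergence; (5) assemble the constants ($\tau_a^*$, $\region$) consistently.

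\textbf{Main obstacle.} The delicate part is step (1)--(2): making the inner-loop guarantee \emph{uniform} over the slow variable's excursions. One must ensure the critic iterate $\criticparam_t$ stays inside $\region_{\bar\criticparam}$ long enough for the $\tau$-GDA basin and the Lipschitz bound on $Q_\criticparam$ to apply, which requires a projection or a localization/stopping-time argument showing the slow iterate does not escape before the fast block contracts — the classic subtlety in two-timescale analyses with a ``moving target.'' A secondary technical point is confirming that the sample-based Stackelberg-gradient estimator (the $h_1, H^{-1}, h_2$ pieces, or the empirical-Fisher surrogate) indeed yields a bias that vanishes appropriately so the noise genuinely satisfies \autoref{assump:main}(c); I would handle this by appealing to the unbiasedness argument around \eqref{eq:unbias} and \autoref{thm:critic_partial}, and by treating any residual estimator bias as part of an asymptotically negligible perturbation absorbed into the ODE-tracking error.
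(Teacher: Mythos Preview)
Your proposal is correct and uses the same core machinery as the paper---two-timescale stochastic approximation, \autoref{lem:tGDA_stab} for the uniform-in-$\criticparam$ inner stability and the choice of $\tau_a^*$, and the implicit function theorem for the local response map. The one noteworthy difference is in how the slow (critic) variable is handled: the paper rewrites the critic update as $\criticparam_{t+1}=\criticparam_t-\lr^\ctrl_t\zeta_t$ with $\zeta_t=\tfrac{\lr^c_t}{\lr^\ctrl_t}(\total_\criticparam\criticcost+\noise_{\criticparam,t})=\smalloh(1)$, argues that the joint iterate tracks the frozen-critic flow $(\dot\criticparam=0,\;\dot\policycparam=\grad_\policycparam\cost,\;\dot\policydparam=-\tau_a\grad_\policydparam\cost)$, and then defers the remaining slow-variable convergence entirely to \cite[Lemma~G.2]{fiez2020implicit}; you instead carry out the classical Borkar decomposition explicitly, identifying the reduced slow ODE $\dot\criticparam=-\total_\criticparam\criticcost(\criticparam,\policycparam^*(\criticparam),\policydparam^*(\criticparam))$ and checking its Jacobian is Hurwitz from the critic's DSE second-order condition. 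Your route is more self-contained and makes the role of the Schur-complement condition transparent; the paper's is more compact but leans on the cited lemma for the substance of the slow-timescale step.
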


\begin{proof}
    Since $(\criticparam^*,\policycparam^*,\policydparam^*)$ is a \gls{DSE}, by the implicit function theorem, there exists a neighbourhood $\region^1_{\criticparam^*}$ around $\criticparam^*$ and unique functions $\resmap_{\policycparam}$, $\resmap_{\policydparam}$ such that $\resmap_{\policycparam}(\criticparam^*) = \policycparam^*$, $\resmap_{\policydparam}(\criticparam^*) = \policydparam^*$, $\grad_\policycparam \cost(\criticparam, \resmap_{\policycparam}(\criticparam), \resmap_{\policydparam}(\criticparam)) = 0$, and $\grad_\policydparam \cost(\criticparam, \resmap_{\policycparam}(\criticparam), \resmap_{\policydparam}(\criticparam)) = 0$ for all $\criticparam \in \region^1_{\criticparam^*}$.
    Moreover, there exists a neighbourhood $\region^2_{\criticparam^*}$ around $\criticparam^*$ on which Hessians $\total_\criticparam^2\criticcost(\criticparam, \resmap_{\policycparam}(\criticparam), \resmap_{\policydparam}(\criticparam)) \succ 0$, $\grad^2_\policycparam\cost(\criticparam, \resmap_{\policycparam}(\criticparam), \resmap_{\policydparam}(\criticparam)) \prec 0$, ${\grad^2_\policydparam\cost(\criticparam, \resmap_{\policycparam}(\criticparam), \resmap_{\policydparam}(\criticparam)) \succ 0}$.
    Since $\grad^2_\policycparam\cost(\criticparam^*, \policycparam^*, \policydparam^*) \prec 0$ and $\grad^2_\policydparam\cost(\criticparam^*, \policycparam^*, \policydparam^*) \succ 0$, there exists a neighbourhood $\region_{\policycparam^*} \times \region_{\policydparam^*}$ around $(\policycparam^*, \policydparam^*)$ such that $\grad^2_\policycparam\cost(\criticparam, \policycparam, \policydparam) \prec 0$ and $\grad^2_\policydparam\cost(\criticparam, \policycparam, \policydparam) \succ 0$ for all $(\criticparam, \policycparam, \policydparam) \in \region_{\criticparam^*} \times \region_{\policycparam^*} \times \region_{\policydparam^*}$, where $\region_{\criticparam^*} \subseteq \region^1_{\criticparam^*} \cap \region^2_{\criticparam^*}$ is a non-empty open set.

    Next, we show that any $(\criticparam_0, \policycparam_0, \policydparam_0) \in \region \subseteq \region_{\criticparam^*} \times \region_{\policycparam^*} \times \region_{\policydparam^*}$ will converge asymptotically almost surely to DSE $(\criticparam^*, \policycparam^*, \policydparam^*)$.
    The continuous-time limiting system of~\eqref{eq:magics_dt_sys} is $(\dot{\criticparam}, \dot{\policycparam}, \dot{\policydparam}) = (-\total_\criticparam \criticcost(\criticparam, \policycparam, \policydparam), \grad_\policycparam \cost(\criticparam, \policycparam, \policydparam), - \tau_a  \grad_\policydparam \cost(\criticparam, \policycparam, \policydparam))$.
    Note that~\eqref{eq:magics_dt_sys:critic} can be written as $\criticparam_{t+1} = \criticparam_t - \lr^\ctrl_t \zeta_t$, where $\zeta_t = \frac{\lr^c_t}{\lr^\ctrl_t} \left(\total_\criticparam \criticcost(\criticparam_t, \policycparam_t, \policydparam_t) + \noise_{\criticparam,t} \right)$. Since $\zeta_t = \smalloh(1)$ for all $t = 0,1,\ldots$, the term is asymptotically negligible and discrete-time state $(\criticparam_t, \policycparam_t, \policydparam_t)$ tracks $(\dot{\criticparam} = 0, \dot{\policycparam} = \grad_\policycparam \cost(\criticparam, \policycparam, \policydparam), \dot{\policydparam} = - \tau_a  \grad_\policydparam \cost(\criticparam, \policycparam, \policydparam))$.
    By~\autoref{lem:tGDA_stab}, this system is locally exponentially stable in a region $\bar{\region}$ around the DSE $(\criticparam^*, \policycparam^*, \policydparam^*)$ for all $\tau_a \in (\tau_a^*, \infty)$.
    Therefore, there exists a local Lyapunov function on $\region = \bar{\region} \cap  \region_{\criticparam^*} \times \region_{\policycparam^*} \times \region_{\policydparam^*}$, which shows that the discrete-time state trajectory and continuous-time flow, both starting from any $(\criticparam_0, \policycparam_0, \policydparam_0) \in \region$, asymptotically contract onto each other.
    The remainder of the proof follows~\cite[Lemma G.2]{fiez2020implicit}, which shows that $\lim_{k \rightarrow \infty}\|(\criticparam_k, \policycparam_k, \policydparam_k) - (\criticparam^*, \policycparam^*, \policydparam^*)\| \rightarrow 0$ almost surely for all $(\criticparam_0, \policycparam_0, \policydparam_0) \in \region$.
    \qed
\end{proof}

While \autoref{thm:MAGICS} guarantees that \gls{MAGICS} can converge to a DSE when the network parameters are initialized in a local region around it, we can leverage \autoref{lem:tGDA_instab} to escape a non-DSE critical point.

\begin{proposition}
    Consider the general-sum game $(\criticcost, -\cost, \cost)$ defined in~\eqref{eq:trilevel}.
    If parameters $(\criticparam^*,\policycparam^*,\policydparam^*)$ is a critical point of the game but not a \gls{DSE},  there exists a $\tau_0 \in (0, \infty)$ such that, for all $\tau_a \in (\tau_0, \infty)$, $(\criticparam^*,\policycparam^*,\policydparam^*)$ is an unstable equilibrium of  the limiting system $(\dot{\criticparam}, \dot{\policycparam}, \dot{\policydparam}) = (-\total_\criticparam \criticcost(\criticparam, \policycparam, \policydparam), \grad_\policycparam \cost(\criticparam, \policycparam, \policydparam), - \tau_a  \grad_\policydparam \cost(\criticparam, \policycparam, \policydparam))$.
\end{proposition}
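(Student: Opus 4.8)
The plan is to mirror the structure of the proof of \autoref{thm:MAGICS}, but now arguing \emph{instability} rather than stability, leveraging \autoref{lem:tGDA_instab} in the role that \autoref{lem:tGDA_stab} played before. Since $(\criticparam^*, \policycparam^*, \policydparam^*)$ is a critical point of the game \eqref{eq:trilevel} but not a \gls{DSE}, one of the three defining conditions of a \gls{DSE} must fail at this point. First I would dispatch the easy cases: if the critic's reduced Hessian $\total_\criticparam^2 \criticcost$ fails to be positive definite, or if $\grad_\policycparam^2 \cost(\criticparam^*, \policycparam^*, \policydparam^*)$ fails to be negative definite, then the relevant block of the Jacobian of the limiting flow $(\dot{\criticparam}, \dot{\policycparam}, \dot{\policydparam}) = (-\total_\criticparam \criticcost, \grad_\policycparam \cost, -\tau_a \grad_\policydparam \cost)$ already has an eigenvalue with positive real part (note $\grad_{\policycparam}\dot{\policycparam} = \grad_\policycparam^2\cost$ enters the linearization with a sign that makes $\grad_\policycparam^2 \cost \succeq 0$ spell instability), so the equilibrium is unstable by Lyapunov's first method regardless of $\tau_a$, and we may take $\tau_0$ arbitrary.

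The substantive case is when the failure is exactly that $(\policycparam^*, \policydparam^*)$ is a critical point of the zero-sum actor subgame $(-\cost, \cost)$ at $\criticparam = \criticparam^*$ but not a minimax (\gls{DSE}) of that subgame --- this is precisely the hypothesis of \autoref{lem:tGDA_instab} with $\bar{\criticparam} = \criticparam^*$. Applying that lemma yields a $\tau_0 \in (0, \infty)$ such that for all $\tau_a > \tau_0$, the Jacobian $\jacobian_{\tau_a}(\jpolicyparam^*; \criticparam^*)$ of the actor flow $\dot{\jpolicyparam} = (\grad_\policycparam \cost, -\tau_a \grad_\policydparam\cost)$ has a spectrum intersecting the open right half-plane. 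The key step is then to transfer this spectral fact to the full three-player flow. Because $\lr_i^c = \smalloh(\lr_i^\ctrl)$ (the critic moves on a slower timescale; equivalently $\dot{\criticparam} = 0$ in the fast subsystem as exploited in the proof of \autoref{thm:MAGICS}), and because a critical point has $\grad_\policycparam\cost = \grad_\policydparam\cost = 0$ so that the off-diagonal coupling $\grad_{\criticparam}(\grad_\policycparam\cost)$-type terms sit in an upper-triangular position relative to the actor block, the Jacobian of the full limiting system at $(\criticparam^*, \policycparam^*, \policydparam^*)$ is block upper-triangular (or at least has the $(\text{critic},\text{actor})$-block structure needed), so its eigenvalues are the union of the critic-block eigenvalues and the eigenvalues of $\jacobian_{\tau_a}(\jpolicyparam^*; \criticparam^*)$. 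The latter contributes an eigenvalue with positive real part, hence the full Jacobian is not Hurwitz, and by the standard linearization/instability theorem the equilibrium is unstable.

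I expect the main obstacle to be making the timescale-separation argument for instability fully rigorous: the clean block-triangular reduction holds for the \emph{limiting ODE} with the critic frozen, but one must argue that the genuine two-timescale limiting system inherits the unstable mode. This can be handled exactly as in the proof of \autoref{thm:MAGICS} --- rewrite \eqref{eq:magics_dt_sys:critic} as $\criticparam_{t+1} = \criticparam_t - \lr_t^\ctrl \zeta_t$ with $\zeta_t = \smalloh(1)$, so the fast system is the actor flow with $\criticparam$ held at $\criticparam^*$, whose linearization is $\jacobian_{\tau_a}(\jpolicyparam^*; \criticparam^*)$ --- and then invoke the instability counterpart of the stochastic-approximation results (e.g.\ \cite{fiez2021local,borkar2009stochastic}, as used for \autoref{lem:tGDA_stab} and \autoref{lem:tGDA_instab}) to conclude that an equilibrium whose limiting-system Jacobian has an eigenvalue with positive real part is unstable. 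A minor bookkeeping point is ensuring the condition $\det \grad_\policydparam^2\cost(\criticparam^*, \jpolicyparam^*) \neq 0$ required by \autoref{lem:tGDA_instab} either follows from the definition of ``critical point'' used here or is folded into the hypothesis; I would state it explicitly to keep the invocation of \autoref{lem:tGDA_instab} clean.
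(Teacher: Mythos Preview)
The paper does not supply an explicit proof of this Proposition: it is stated immediately after \autoref{thm:MAGICS} with the surrounding text indicating only that one ``can leverage \autoref{lem:tGDA_instab}'' here, just as \autoref{lem:tGDA_instab} itself is stated without proof as the analogue of \autoref{lem:tGDA_stab} and \cite[Theorem~2]{fiez2021local}. Your overall plan---mirror the proof of \autoref{thm:MAGICS} while swapping \autoref{lem:tGDA_stab} for \autoref{lem:tGDA_instab}---is therefore exactly the paper's intended route, and your case analysis on which \gls{DSE} condition fails is more than the paper offers.

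However, your execution has a genuine gap. You assert that the Jacobian of the full limiting flow is block upper-triangular so that the eigenvalues of the actor block $\jacobian_{\tau_a}(\jpolicyparam^*;\criticparam^*)$ are automatically eigenvalues of the full matrix. This is false: the $\dot\policycparam$- and $\dot\policydparam$-rows of the linearization carry the cross-derivatives $\grad_{\criticparam\policycparam}\cost$ and $-\tau_a\grad_{\criticparam\policydparam}\cost$ in the critic column, and these second-order quantities do not vanish merely because the first-order conditions $\grad_\policycparam\cost=\grad_\policydparam\cost=0$ hold at the critical point. You then try to rescue the argument via the learning-rate separation $\lr^c_i=\smalloh(\lr^\ctrl_i)$ and stochastic-approximation results, but neither is available here: the Proposition does not hypothesize any critic--actor timescale ratio, and its conclusion concerns instability of a single continuous-time ODE, so the two-timescale rewriting of \eqref{eq:magics_dt_sys:critic} and the appeals to \cite{borkar2009stochastic} are beside the point. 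The same objection undermines your ``easy cases'': a non-definite diagonal sub-block does not by itself force an eigenvalue of the full (non-triangular) Jacobian into the open right half-plane. What would actually close the argument is an eigenvalue-perturbation analysis in $\tau_a$ on the full three-block Jacobian---extending the \emph{technique} behind \cite[Theorem~2]{fiez2021local} to the augmented system rather than merely citing its conclusion through \autoref{lem:tGDA_instab}.
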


\section{Convergent Neural Synthesis of Robot Safety}
\label{sec:isaacs}

In this section, we apply \gls{MAGICS} to high-dimensional robot safety analysis, and propose the \gls{MAGICS}-Safety algorithm for convergent neural synthesis of safe robot policies.
Following prior work on \gls{RL}-based approximate reachability analysis~\cite{fisac2019bridging,hsunguyen2023isaacs}, we consider a time-discounted version of Isaacs equation~\eqref{eq:Isaacs_basic}:
\begin{equation}
\label{eq:Isaacs_discounted}
\begin{aligned}
    \valfunc^\shield(\state) = \min \left\{ (1-\discount) g(\state), \discount \max \Big\{ \ell(\state),  \max_{\ctrl \in \cset} \min_{\dstb \in \dset} \valfunc^\shield \big(\dyn (\state, \ctrl, \dstb)\big) \Big\}  \right\}.
\end{aligned}
\end{equation}
The discount factor $\discount \in (0,1)$ leads to a probabilistic interpretation, \ie, there is $(1-\discount)$ probability that the episode terminates immediately due to loss of safety.
Note that as $\discount \rightarrow 1$, we recover the undiscounted Isaacs equation~\eqref{eq:Isaacs_basic}.
Then, we apply \gls{SAC} to approximately solve \eqref{eq:Isaacs_discounted}.
The critic minimizes loss
\begin{equation}
    \label{eq:isaacs_critic}
    \hspace{-0.2cm}\criticcost^\shield(\criticparam,\policycparam,\policydparam) = \expectation_{\xi \sim \buffer} \left[ \left(\qfunc_{\criticparam_1} (\state,\ctrl,\dstb) - (1-\discount) g^\prime - \discount \min \left\{g^\prime, \qfunc_{\criticparam_2}(\state^\prime, \ctrl^\prime, \dstb^\prime) \right\} \right)^2\right],
\end{equation}
where $g^\prime := g(\state^\prime)$, $\xi = (\state,\ctrl,\dstb,g^\prime,\state^\prime)$, $\ctrl^\prime \sim \policyc_{\policycparam} (\cdot | \state^\prime)$, and $\dstb^\prime \sim \policyd_{\policydparam} (\cdot | \state^\prime)$.
The controller maximizes objective
\begin{equation}
    \label{eq:isaacs_actor}
    \cost^\shield(\criticparam,\policycparam,\policydparam) = \expectation_{\state \sim \buffer} \left[ \qfunc_{\criticparam_1} (\state,\tilde{\ctrl},\tilde{\dstb})
    - \entreg^\ctrl \log \policyc_\policycparam(\tilde{\ctrl} | \state)
    + \entreg^\dstb \log \policyd_\policydparam(\tilde{\dstb} | \state)\right],
\end{equation}
where $\tilde{\ctrl} \sim \policyc_{\policycparam} (\cdot | \state)$, $\tilde{\dstb} \sim \policyd_{\policydparam} (\cdot | \state)$.
The disturbance minimizes $\cost^\shield(\criticparam,\policycparam,\policydparam)$.

The \gls{MAGICS}-Safety training objectives \eqref{eq:isaacs_critic}-\eqref{eq:isaacs_actor} differ from~\cite{hsunguyen2023isaacs} in that they explicitly capture the coupling among the critic and actors (hence their non-cooperative interactions).
This game-theoretic formulation (c.f.~\eqref{eq:trilevel}) of adversarial \gls{RL} facilitates developing convergence guarantees using the \gls{MAGICS} paradigm in \autoref{sec:approach}.
In the following theorem, we show that \gls{MAGICS}-Safety locally converges to a \gls{DSE}, which is a direct consequence of \gls{MAGICS} convergence in \autoref{thm:MAGICS}.

\begin{figure}[!hbtp]
  \centering
  \includegraphics[width=0.7\columnwidth]{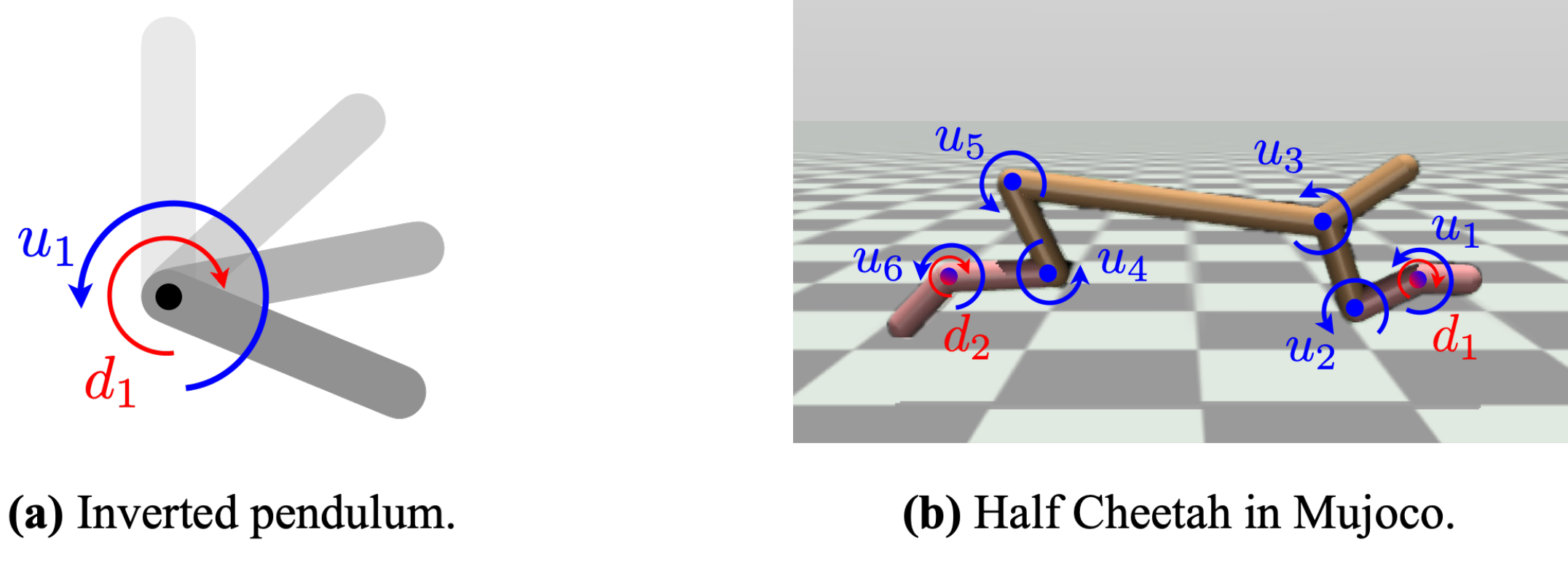}
  \caption{\label{fig:gym_examples} The Pendulum and Half Cheetah environments in OpenAI Gym~\cite{brockman2016openai}, with control and disturbance actions represented in blue and red arrows, respectively.
  The Pendulum's control inputs are one-dimensional torques applied to the end of the rod in opposition to each other.
  The Half Cheetah has a six-dimensional control input on the notated joints, with the disturbance acting to destabilize the cheetah through additional torques on its paws. 
  \vspace{-5mm}
  }
\end{figure}

\begin{theorem}[Convergence of \gls{MAGICS}-Safety]
    Consider the general-sum game defined in~\eqref{eq:trilevel} with game objectives $(\criticcost^\shield, -\cost^\shield, \cost^\shield)$ defined in \eqref{eq:isaacs_critic}-\eqref{eq:isaacs_actor}.
    Let \autoref{assump:main} hold.
    If $(\criticparam^*,\policycparam^*,\policydparam^*)$ is a \gls{DSE} and $\lr^c_i = \smalloh(\lr^\dstb_i)$,  there exists a $\tau_a^* \in (0, \infty)$  
    and a neighbourhood $\region^\shield = \region_{\criticparam^*}^\shield \times \region_{\policycparam^*}^\shield \times \region_{\policydparam^*}^\shield$ around $(\criticparam^*,\policycparam^*,\policydparam^*)$ such that, for all $(\criticparam_0, \policycparam_0, \policydparam_0) \in U^\shield$ and all $\tau_a \in (\tau_a^*, \infty)$, iterates $(\criticparam_t, \policycparam_t, \policydparam_t)$ of the \gls{MAGICS}-Safety training procedure converge asymptotically almost surely to $(\criticparam^*,\policycparam^*,\policydparam^*)$.
\end{theorem}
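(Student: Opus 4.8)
The plan is to recognize that the \gls{MAGICS}-Safety update scheme \eqref{eq:isaacs_critic}--\eqref{eq:isaacs_actor} is a particular instantiation of the general-sum trilevel game \eqref{eq:trilevel}, obtained by the substitutions $\criticcost \leftarrow \criticcost^\shield$ and $\cost \leftarrow \cost^\shield$, and then to invoke \autoref{thm:MAGICS} essentially verbatim. Concretely, I would first observe that under the stated learning-rate condition $\lr^c_i = \smalloh(\lr^\dstb_i)$ the \gls{MAGICS}-Safety iterates are exactly the discrete-time stochastic system \eqref{eq:magics_dt_sys} driven by sample-based estimators of $\total_\criticparam \criticcost^\shield$, $\grad_\policycparam \cost^\shield$, and $\grad_\policydparam \cost^\shield$ (the latter two produced by the \gls{tGDA} inner loop with ratio $\tau_a$). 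Consequently, once the hypotheses of \autoref{thm:MAGICS} are checked for this choice of objectives, nothing in its proof needs to be re-derived: the conclusion transfers directly, yielding the claimed $\tau_a^*$, the neighbourhood $\region^\shield = \region_{\criticparam^*}^\shield \times \region_{\policycparam^*}^\shield \times \region_{\policydparam^*}^\shield$, and almost-sure local convergence to $(\criticparam^*,\policycparam^*,\policydparam^*)$.

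The substantive work is therefore verifying \autoref{assump:main} for $(\criticcost^\shield, -\cost^\shield, \cost^\shield)$. Parts (b) and (c) concern only the step-size schedule and the martingale-difference structure of the sampling noise, and hold by the same standing assumptions used throughout \autoref{sec:approach}. For part (a), I would argue that the actor and critic networks are $C^q$ with $q \geq 2$, the reward/margin function $g$ is Lipschitz, the action sets $\cset$ and $\dset$ are compact, and the discount $\discount \in (0,1)$ keeps the safety Bellman targets in \eqref{eq:isaacs_critic}--\eqref{eq:isaacs_actor} bounded; together with the (reparameterized, exponential-family) policy structure, this gives boundedness of $\total_\criticparam\criticcost^\shield$ and Lipschitz continuity of the three gradient maps on the relevant parameter neighbourhood, as required.

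The main obstacle is the non-smoothness introduced by the pointwise minima in \eqref{eq:isaacs_critic} and \eqref{eq:isaacs_actor}: the safety backup $\min\{(1-\discount)g, \dots\}$, the clipped target $\min\{g', \qfunc_{\criticparam_2}(\state',\ctrl',\dstb')\}$, and the clipped double-$Q$ term in the actor objective are only piecewise-$C^q$, whereas the blanket smoothness convention of the paper assumes global $C^q$. I would resolve this by working locally: at a \gls{DSE} $(\criticparam^*,\policycparam^*,\policydparam^*)$ the two arguments of each $\min$ differ strictly on a subset of the sampling distribution of positive measure, so by continuity there is a neighbourhood of the equilibrium on which the active branch of every $\min$ is (in distribution) locally constant, rendering $\criticcost^\shield$ and $\cost^\shield$ locally $C^q$ there and hence satisfying \autoref{assump:main}(a) on $\region_{\criticparam^*}^\shield \times \region_{\policycparam^*}^\shield \times \region_{\policydparam^*}^\shield$; alternatively one may simply appeal to the standard $C^q$ smoothing of $\min\{a,b\}$ routinely used in the safety-\gls{RL} literature. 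With \autoref{assump:main} thus in force on a neighbourhood of the \gls{DSE}, \autoref{thm:MAGICS} applies directly and delivers the statement, so I expect essentially all the genuine effort to lie in this local-smoothness bookkeeping.
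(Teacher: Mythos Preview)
Your proposal is correct and matches the paper's approach: the paper presents this theorem as a direct consequence of \autoref{thm:MAGICS} and gives no separate proof, so invoking \autoref{thm:MAGICS} with $\criticcost \leftarrow \criticcost^\shield$, $\cost \leftarrow \cost^\shield$ is exactly what is intended. Your additional effort to verify \autoref{assump:main} for the safety objectives (including the local-smoothness bookkeeping around the $\min$ operators) is unnecessary here, since the theorem statement takes \autoref{assump:main} as a standing hypothesis rather than a conclusion to be derived.
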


\vspace{-2mm}
\section{Experiments}
\vspace{-2mm}
In this section, we illustrate the strength of our approach in two simulated and one hardware examples that differ in task, problem scale, and computation approach.
\textit{Our main hypothesis is that the Stackelberg-minimax robust \gls{RL} is likelier to yield a stronger policy (for both controller and disturbance) than non-game robust \gls{RL} baselines.
}

\vspace{-3mm}
\subsection{Simulated Examples: Robust Control in OpenAI Gym}
Our simulation experiments build upon the OpenAI~\cite{brockman2016openai} Gym with the Mujoco physics simulator~\cite{todorov2012mujoco} and Stable-Baselines3~\cite{sb3} platforms. Specifically, we adapt the Pendulum and Half Cheetah environments (\autoref{fig:gym_examples}), originally created for single-agent \gls{RL}, to account for two inputs--controller and disturbance.
\begin{figure}[!hbtp]
  \centering
  \includegraphics[width=1.0\columnwidth]{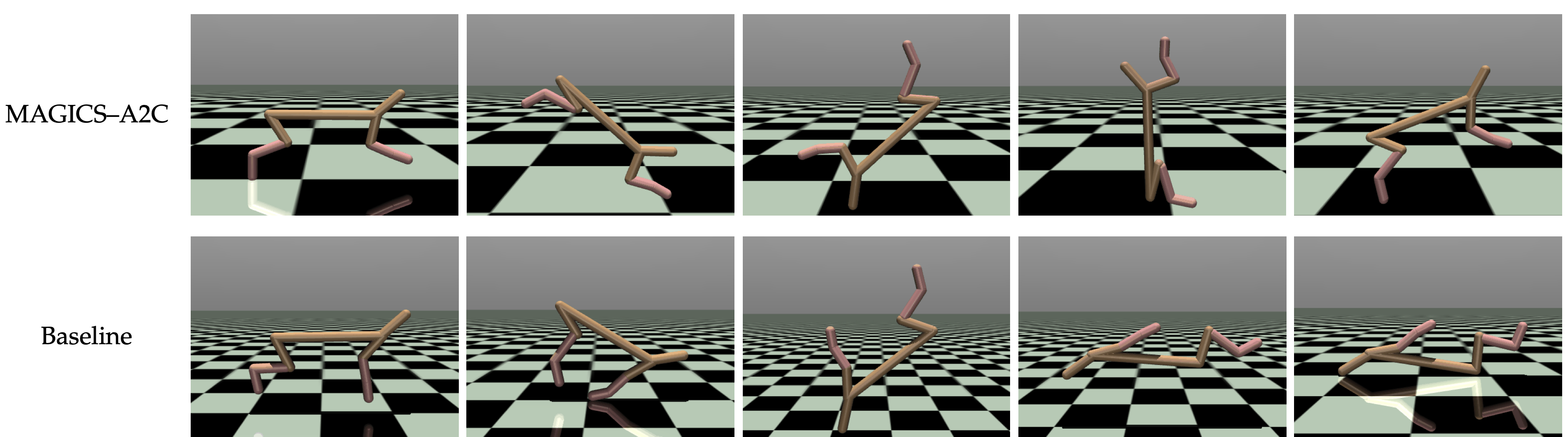}
  \caption{\label{fig:halfcheetah} Snapshots of the Half Cheetah controlled by \gls{MAGICS}-\gls{A2C} and baseline-\gls{A2C}. 
  Despite an excessively large
  disturbance torque,
  the \gls{MAGICS}-\gls{A2C} policy manage to flip the robot back upright and resumed normal gaits.
  In contrast, the baseline-\gls{A2C} policy is unable to recover the robot from the overturn; it moved awkwardly on its face and back, wiggling its feet.
  \vspace{-5mm}
  }
\end{figure}
We compare MAGICS-\gls{A2C} and MAGICS-\gls{SAC} against an ablation study and a baseline method---the ablation removes the critic's Stackelberg gradient but preserves $\tau$-GDA between the agents, and the baseline enforces all learning rates to be identical.
\autoref{tab:pend_confusion_matrix} and \autoref{tab:pend_confusion_matrix:SAC} display results from a series of round-robin matches between the learned control and disturbance policies for all three methods. For each match, the selected control and disturbance play five sets of 100 games. We report the win rate of each controller for each controller/disturbance strategy pair.
We assign a controller failure should any of the following occur: (1) the controller swings the pendulum into the upright position, but the disturbance is able to destabilize it (the pendulum tip moves outside $\pm$10$^\circ$ from the vertical), (2) the controller swings the pendulum up, but cannot swing it soon enough for more than $5$ seconds to remain, or (3) the controller completely fails to bring the pendulum to the upright position.
If none of these three failure modes occur, we consider the controller to have won the game.
In these tables, for each disturbance strategy, we highlight, in bold-type, the most performant control strategy. All hyperparameters and model architecture (\eg, number of hidden layers, activation functions) among the models are identical with the exception of the learning rates as per $\tau$-GDA.

For the pendulum, our results demonstrate that the \gls{MAGICS} controller outperforms the ablation and baseline controllers against all three disturbances. In addition, since the two actors evolve together, there is strong evidence to believe that, for the snapshot of performance depicted in the tables, each disturbance relative to its associated control, is also performant---the diagonal entries depict a win rate that is close to even.
However, given that \gls{MAGICS}'s disturbance can resoundingly defeat the other individually strong control strategies, there is strong evidence that the critic's Stackelberg gradient encourages the disturbance to learn alternative strategies to attack the control.

\begin{figure}
    \centering
    \includegraphics[width=0.85\linewidth]{./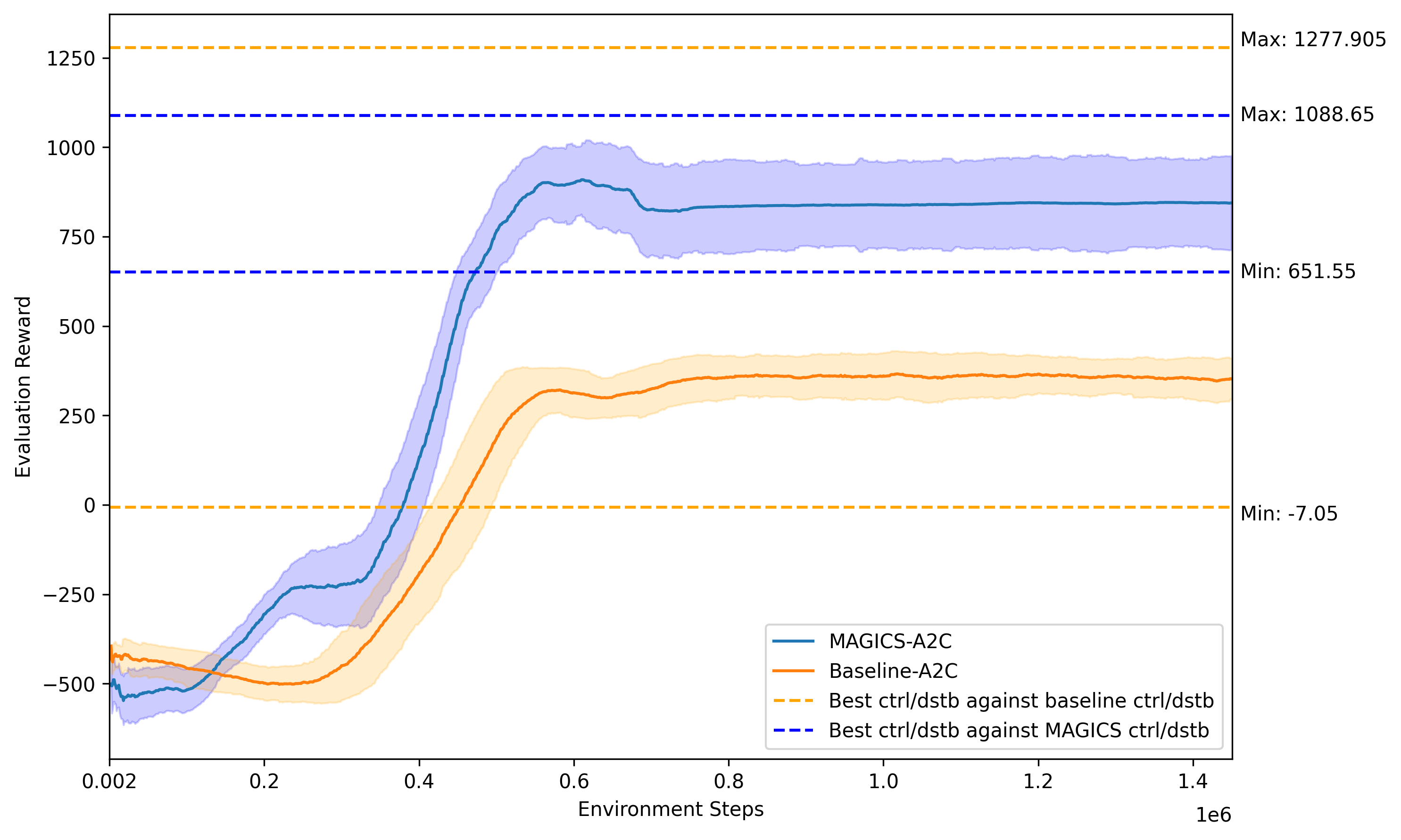}
    \caption{Cumulative reward curves across five seeds of MAGICS-\gls{A2C} (blue) and baseline-\gls{A2C} (orange) for the adversarial Half Cheetah environment. MAGICS-\gls{A2C} converges to an equilibrium that outperforms the converged baseline equilibrium by $\sim2.7$ times. Dashed lines represent exploiter disturbances against the same controller color.
    \vspace{-6mm}}
    \label{fig:halfcheetah_rew_curves}
\end{figure}

We also perform an analogous tournament in the Half Cheetah example. However, for this environment, success or failure can no longer be straightforwardly defined with a single logical variable.
For the Half Cheetah, the objective is to run to the right as quickly and for as long as possible (until the episode is terminated at a maximum step count).
To this end, we report the raw cumulative reward in \autoref{tab:cheetah_confusion_matrix_a2c:rwd}.
MAGICS-\gls{A2C} achieves significantly higher reward compared to both ablation and baselines. Furthermore, the MAGICS-\gls{A2C} disturbance is quantitatively stronger than that of the baseline, as the controller reward increases when MAGICS-\gls{A2C}'s controller is pitted against the baseline disturbance. \autoref{fig:halfcheetah} showcases the robustness of the \gls{MAGICS} controller with a representative example.
The baseline controller fails to learn how to recover the cheetah from turnover. In contrast, 
\gls{MAGICS}'s controller has learned to leverage the momentum---when the disturbance tips the cheetah too far forward, the controller adapts the fall into a roll---to get the cheetah back to its feet.

\setlength\tabcolsep{10pt}
\begin{table}[!hbtp]
\centering
\resizebox{\columnwidth}{!}{
\begin{tabular}{l|cccc}
\toprule
\diagbox{ctrl. strategy}{dstb. strategy} &
\gls{MAGICS}-A2C &
\gls{MAGICS}-A2C-ablation &
Baseline-A2C  &
\\ \midrule
\gls{MAGICS}-A2C  $\uparrow$                  &$\mathbf{78.8\%}$             &$\mathbf{67.4\%}$            &$\mathbf{74.6\%}$     \\
\gls{MAGICS}-A2C-ablation   $\uparrow$              &$36.4\%$                      &$40.6\%$                     &$38.2\%$     \\
Baseline-A2C    $\uparrow$            &$59.6\%$                      &$58.0\%$                     &$53.0\%$     \\ \bottomrule
\end{tabular}
}
\vspace{0.1em}
\caption{Win rates across five sets of 100 zero-sum games between the corresponding controller and disturbance from a random initial state in the Pendulum environment. Stackelberg gradient boosts the self-play win rate by approximately 38\%, as evidenced by the diagonal entries.
\vspace{-7mm}
}
\label{tab:pend_confusion_matrix}
\end{table}
\vspace{-8mm}

\setlength\tabcolsep{10pt}
    \begin{table}[!hbtp]
    \centering
    \resizebox{\columnwidth}{!}{
    \begin{tabular}{l|cccc}
    \toprule
    \diagbox{ctrl. strategy}{dstb. strategy} &
    \gls{MAGICS}-SAC &
    \gls{MAGICS}-SAC-ablation &
    Baseline-SAC  &
    \\ \midrule
    \gls{MAGICS}-SAC  $\uparrow$                  &$\mathbf{66.5\%}$             &$\mathbf{63.25\%}$            &$80.75\%$     \\
    \gls{MAGICS}-SAC-ablation   $\uparrow$              &$51.5\%$                      &$57.5\%$          &$\mathbf{83.0\%}$     \\
    Baseline-SAC    $\uparrow$            &$61.25\%$               &$60.25\%$                     &$80.25\%$     \\ 
    \bottomrule
    \end{tabular}
    }
    \vspace{0.1em}
    \caption{Win rates obtained from 100 zero-sum games between the corresponding five controller and disturbance random seeds for the Pendulum. MAGICS-\gls{SAC} outperforms almost all the other disturbance classes.
    \vspace{-15mm}
    }
    \label{tab:pend_confusion_matrix:SAC}
    \end{table}

\setlength\tabcolsep{10pt}
    \begin{table}[!hbtp]
    \centering
    \resizebox{\columnwidth}{!}{
    \begin{tabular}{l|cccc}
    \toprule
    \diagbox{ctrl. strategy}{dstb. strategy} &
    \gls{MAGICS}-A2C &
    \gls{MAGICS}-A2C-ablation &
    Baseline-A2C  &
    \\ \midrule
    \gls{MAGICS}-A2C  $\uparrow$                  &$\mathbf{972.14 \pm 289.19}$             &$\mathbf{967.78 \pm 376.77}$            &$\mathbf{1121.81 \pm 137.13}$     \\
    \gls{MAGICS}-A2C-ablation   $\uparrow$              &$316.46 \pm 281.73$                      &$285.41 \pm 275.00$                     &$325.56 \pm 277.35$     \\
    Baseline-A2C    $\uparrow$            &$360.85 \pm 154.86$                      &$335.57 \pm 115.96$                     &$409.04 \pm 118.44$     \\ \bottomrule
    \end{tabular}
    }
    \vspace{0.1em}
    \caption{Episodic reward obtained from five sets of 100 zero-sum games between the corresponding controller and disturbance from a random initial state for the Half Cheetah. MAGICS-\gls{A2C} achieves \emph{significantly} higher reward against all other disturbance classes, with qualitatively more innovative strategies found relative to that of the baseline (\autoref{fig:halfcheetah}).
    \vspace{-15mm}
    }
    \label{tab:cheetah_confusion_matrix_a2c:rwd}
    \end{table}

\setlength\tabcolsep{10pt}
    \begin{table}[!hbtp]
    \centering
    \resizebox{\columnwidth}{!}{
    \begin{tabular}{l|cccc}
    \toprule
    \diagbox{ctrl. strategy}{dstb. strategy} &
    \gls{MAGICS}-SAC &
    \gls{MAGICS}-SAC-ablation &
    Baseline-SAC  &
    \\ \midrule
    \gls{MAGICS}-SAC  $\uparrow$                  &$\mathbf{1443.84 \pm 143.26}$             &$\mathbf{1458.82 \pm 145.35}$            &$\mathbf{1557.08 \pm 114.35}$     \\
    \gls{MAGICS}-SAC-ablation   $\uparrow$              &$1218.30 \pm 52.44$                      &$1206.81 \pm 75.45$                     &$1228.61 \pm 52.06$     \\
    Baseline-SAC    $\uparrow$            &$990.07 \pm 342.85$                      &$1016.99 \pm 326.30$                     &$1020.71 \pm 321.99$     \\ \bottomrule
    \end{tabular}
    }
    \vspace{0.1em}
    \caption{Episodic reward obtained from five sets of 100 zero-sum games between the corresponding controller and disturbance from a random initial state for the Half Cheetah. SAC, being off-policy and state-of-the-art for model-free RL, achieves a much higher reward in all domains compared to A2C. And yet, MAGICS-\gls{SAC} persistently achieves a higher reward against all other disturbance classes.
    \vspace{-5mm}
    }
    \label{tab:cheetah_confusion_matrix_sac:rwd}
    \end{table}

\begin{figure}[!hbtp]
  \centering
  \includegraphics[width=1.0\columnwidth]{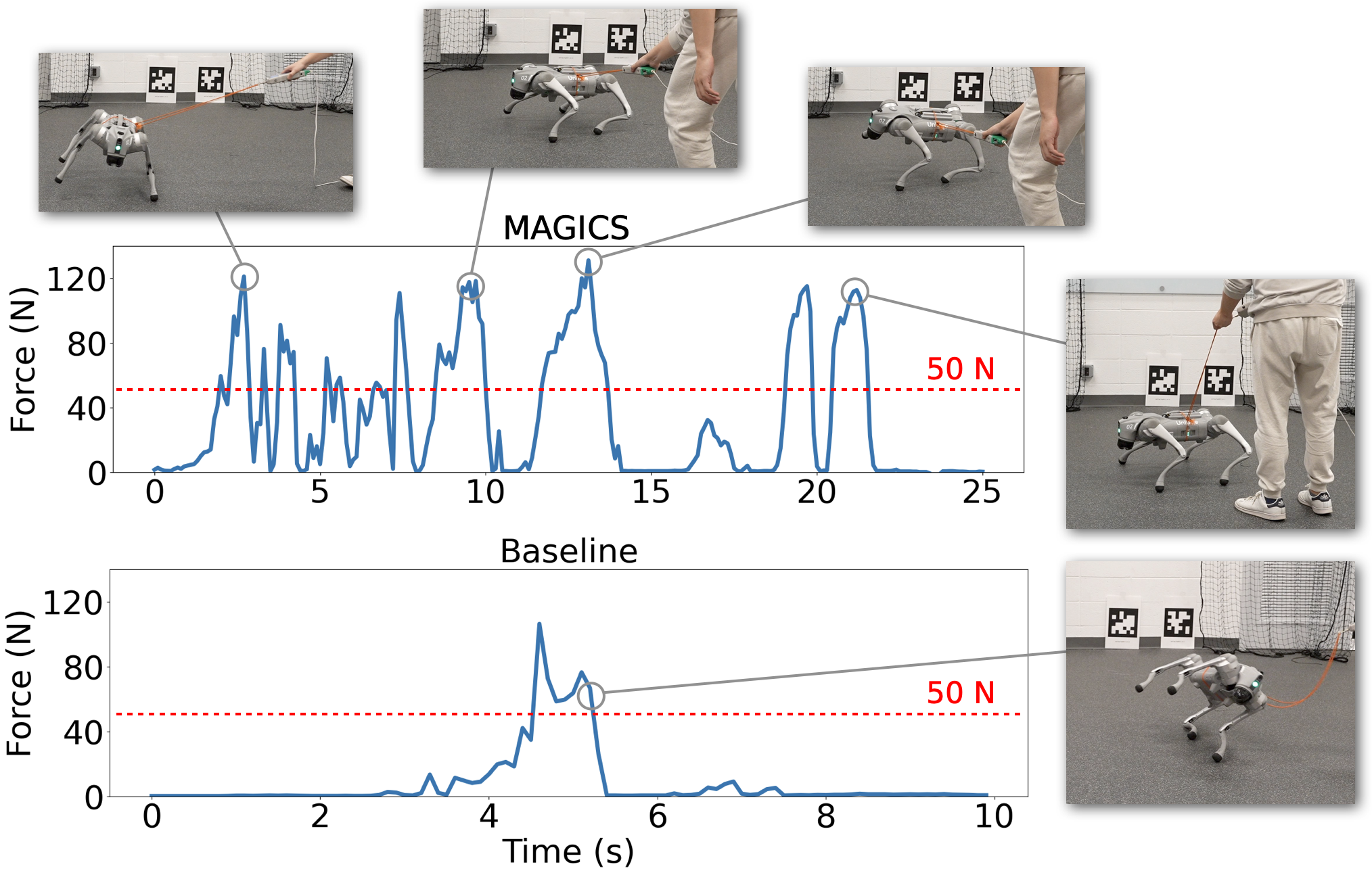}
  \caption{\label{fig:force_plot} Time evolution of the human's tugging forces (disturbance) with \gls{MAGICS}-Safety and the baseline.
  Both policies are trained in simulation with a maximum of 50 N tugging force disturbance.
  The \gls{MAGICS}-Safety policy is robust against the varying tugging forces from different angles, while the baseline failed even with tugging forces of smaller magnitude.
  \vspace{-5mm}
  }
\end{figure}

We also display reward curves for the Half Cheetah environment in \autoref{fig:halfcheetah_rew_curves}. From the same initial conditions, qualitatively, \gls{MAGICS} converges to more performant equilibria than the baseline does, with a controller reward approximately $\sim2.7$ times greater than that of the baseline controller and disturbance pair. We also plot, in dashed lines, the final exploit curve results: the ``best'' and ``worst'' disturbances against the ``best'' controllers of that color, \eg, dashed-blue lines represent the strongest MAGICS-\gls{A2C} controller against the strongest and weakest of the baseline-\gls{A2C} disturbances. Even against the strongest baseline-trained disturbance, MAGICS-\gls{A2C}'s performance only degrades slightly, as opposed to the baseline controller against the \gls{MAGICS} disturbance, in which all strategies fail as evidenced by the extremely poor reward.
\vspace{-3mm}
\subsection{Hardware Demonstration: Safe Quadrupedal Locomotion}

Next, we apply \gls{MAGICS}-Safety (\autoref{sec:isaacs}) to a safe quadrupedal locomotion task.

\p{Hardware} 
We utilize the Unitree Go-2 as the test robot platform.
The robot is equipped with built-in IMUs for obtaining measurements about body angular velocities and linear acceleration, and internal motor encoders that measure joint positions and velocities.
The robot also provides a Boolean contact signal for each foot.
No visual perception is used for computing the control policy.

\p{System Dynamics}
The quadrupedal robot's state is 36-dimensional, including positions of the body frame, velocities of the robot’s torso, rotation angles, axial rotational rates, angle, angular velocity, and commanded angular increment of the robot's joints.
The robot’s control inputs include independent torques applied on each of its 12 rotational joints provided by an electric motor.
Our neural control policy sends a reference signal to each motor, which is tracked by a low-level controller.
See~\cite{nguyen2024gameplay} for a detailed explanation of the robot's dynamic model and safety specification.

\p{Baseline} 
We compare to ISAACS~\cite{hsunguyen2023isaacs,nguyen2024gameplay}, the former state-of-the-art adversarial neural safety synthesis method, which uses individual gradients and alternating optimization for updating the critic and actor parameters.
Both policies are trained with the PyBullet~\cite{coumans2016pybullet} physics engine.
We use the same neural network architecture as~\cite{nguyen2024gameplay}.

\p{Policy Training} 
We train both policies for 1.5 million steps.
During training, we use a maximum of 50 N tugging force disturbance for both policies.
Once the policies are trained offline, they are deployed online within a \textit{value-based} safety filter~\cite[Sec.~3.2]{hsu2023sf} to prevent the robot from falling under adversarial human tugging forces.

\p{Evaluation: The Tugging Experiment} 
We evaluate the robustness of the trained policies with a human tugging the robot from different angles and using forces with a varying magnitude (\autoref{fig:front_fig}).
The time evolution of the forces applied by the human is plotted in \autoref{fig:force_plot}.
The \gls{MAGICS}-Safety policy is able to withstand the external force, with a peak value above 120 N, for the entire test horizon of $25$ seconds.
The baseline fails to resist a force with a peak value less than 120 N.
This test empirically demonstrates the superior robustness of the \gls{MAGICS}-Safety policy compared to the baseline.

\vspace{-2mm}
\section{Limitations and Future Work}
Our proposed \gls{MAGICS} algorithm requires the computation of second-order information over the space of neural network parameters (c.f.~\eqref{eq:critic_total_deriv_compact}), which can be expensive to obtain.
Recent advances in Stackelberg learning using only first-order information~\cite{maheshwari2023convergent} offer a promising pathway to ease the computation burden.
In this paper, we focus exclusively on developing a game-theoretic variant of the 
actor--critic \gls{RL}.
Our algorithmic and theoretical framework may be extended to broader multi-agent \gls{RL} settings with different algorithms, such as policy gradient, and with general-sum objectives.

\vspace{-2mm}
\section{Conclusions}
In this paper, we introduced Minimax Actors Guided by Implicit Critic Stackelberg (MAGICS), a novel game-theoretic reinforcement learning algorithm that is provably convergent to an equilibrium solution.
Building on \gls{MAGICS}, we also offered convergence assurances for an \gls{RL}-based robot safety synthesis method.
Our empirical evaluations, conducted through simulations in OpenAI Gym and hardware tests using a 36-dimensional quadruped robot, demonstrated that \gls{MAGICS} produced robust control policies consistently outperforming the state-of-the-art neural safe control method.

\vspace{-3mm}

\section*{Acknowledgements}
The authors thank Chi Jin, Wenzhe Li, Zixu Zhang, Kai-Chieh Hsu, and Kaiqu Liang for helpful discussions throughout the project. Compute time was provided by Princeton Research Computing HPC facilities. The work has been partly supported by the DARPA LINC program and an NSF CAREER Award.

\bibliography{references}
\bibliographystyle{splncs04}

\end{document}